\newtheorem{theorem}{Theorem}
\newtheorem{lemma}{Lemma}
\newtheorem{definition}{Definition}
\newcommand{\reals}{\mathbb{R}}
\newcommand{\E}{\mathbb{E}}
\newcommand{\be}{\mathbf{e}}
\newcommand{\bx}{\mathbf{x}}
\newcommand{\bw}{\mathbf{w}}
\newcommand{\bu}{\mathbf{u}}
\newcommand{\bv}{\mathbf{v}}
\newcommand{\bz}{\mathbf{z}}
\newcommand{\bh}{\mathbf{h}}
\newcommand{\Ocal}{\mathcal{O}}
\newcommand{\Xcal}{\mathcal{X}}
\newcommand{\secref}[1]{Sec.~\ref{#1}}
\newcommand{\subsecref}[1]{Subsection~\ref{#1}}
\renewcommand{\eqref}[1]{Eq.~(\ref{#1})}
\newcommand{\lemref}[1]{Lemma~\ref{#1}}
\newcommand{\thmref}[1]{Thm.~\ref{#1}}
\title{Fundamental Limits of Online and Distributed Algorithms
for Statistical Learning and Estimation}
\date{}
\author{Ohad Shamir\\Weizmann Institute of Science\\\texttt{ohad.shamir@weizmann.ac.il}}
\begin{document}

\maketitle

\begin{abstract}
Many machine learning approaches are characterized by information
constraints on how they interact with the training data. These include
memory and sequential access constraints (e.g. fast first-order methods to
solve stochastic optimization problems); communication constraints (e.g.
distributed learning); partial access to the underlying data (e.g. missing
features and multi-armed bandits) and more. However, currently we have
little understanding how such information constraints fundamentally affect
our performance, independent of the learning problem semantics. For
example, are there learning problems where any algorithm which has small
memory footprint (or can use any bounded number of bits from each example,
or has certain communication constraints) will perform worse than what is
possible without such constraints? In this paper, we describe how a single
set of results implies positive answers to the above, for several different
settings.
\end{abstract}

\section{Introduction}

Information constraints play a key role in machine learning. Of course, the
main constraint is the availability of only a finite data set, from which the
learner is expected to generalize. However, many problems currently
researched in machine learning can be characterized as learning with
\emph{additional} information constraints, arising from the manner in which
the learner may interact with the data. Some examples include:
\begin{itemize}
    \item \emph{Communication constraints in distributed learning:} There
        has been much work in recent years on learning when the training
        data is distributed among several machines (with
        \cite{bekkerman2011scaling,agarwal2011reliable,dekel2011optimal,niu2011hogwild,cotter2011better,duchi2012dual,balcan2012distributed,boyd2011distributed,kyrola2011parallel}
        being just a few examples). Since the machines may work in
        parallel, this potentially allows significant computational
        speed-ups and the ability to cope with large datasets. On the flip
        side, communication rates between machines is typically much slower
        than their processing speeds, and a major challenge is to perform
        these learning tasks with minimal communication.
    \item \emph{Memory constraints:} The standard implementation of many
        common learning tasks requires memory which is super-linear in the
        data dimension. For example, principal component analysis (PCA)
        requires us to estimate eigenvectors of the data covariance matrix,
        whose size is quadratic in the data dimension and can be
        prohibitive for high-dimensional data. Another example is kernel
        learning, which requires manipulation of the Gram matrix, whose
        size is quadratic in the number of data points. There has been
        considerable effort in developing and analyzing algorithms for such
        problems with reduced memory footprint (e.g.
        \cite{mitliagkas2013memory,arora2013stochastic,balsubramani2013fast,williams2001using,rahimi2007random}).
    \item \emph{Online learning constraints:} The need for fast and
        scalable learning algorithms has popularised the use of online
        algorithms, which work by sequentially going over the training
        data, and incrementally updating a (usually small) state vector.
        Well-known special cases include gradient descent and mirror
        descent algorithms (see e.g.
        \cite{shalev2011online,sra2011optimization}). The requirement of
        sequentially passing over the data can be seen as a type of
        information constraint, whereas the small state these algorithms
        often maintain can be seen as another type of memory constraint.
    \item \emph{Partial-information constraints:} A common situation in
        machine learning is when the available data is corrupted, sanitized
        (e.g. due to privacy constraints), has missing features, or is
        otherwise partially accessible. There has also been considerable
        interest in online learning with partial information, where the
        learner only gets partial feedback on his performance. This has
        been used to model various problems in web advertising, routing and
        multiclass learning. Perhaps the most well-known case is the
        multi-armed bandits problem
        \cite{bubeck2012regret,auer2002nonstochastic,auer2002finite}, with
        many other variants being developed, such as contextual bandits
        \cite{langford2007epoch,li2010contextual}, combinatorial bandits
        \cite{cesa2012combinatorial}, and more general models such as
        partial monitoring \cite{bubeck2012regret,bartok2013partial}.
\end{itemize}

Although these examples come from very different domains, they all share the
common feature of information constraints on how the learning algorithm can
interact with the training data. In some specific cases (most notably,
multi-armed bandits, and also in the context of certain distributed protocols, e.g.
\cite{balcan2012distributed,ZDJW13}) we can even formalize the price we pay
for these constraints, in terms of degraded sample complexity or regret
guarantees. However, we currently lack a general information-theoretic
framework, which directly quantifies how such constraints can impact
performance. For example, are there cases where any online algorithm, which
goes over the data one-by-one, must have a worse sample complexity than (say)
empirical risk minimization? Are there situations where a small memory
footprint provably degrades the learning performance? Can one quantify how a
constraint of getting only a few bits from each example affects our ability
to learn? To the best of our knowledge, there are currently no generic tools
which allow us to answer such questions, at least in the context of standard
machine learning settings.

In this paper, we make a first step in developing such a framework. We
consider a general class of learning processes, characterized only by
information-theoretic constraints on how they may interact with the data (and
independent of any specific problem semantics). As special cases, these
include online algorithms with memory constraints, certain types of
distributed algorithms, as well as online learning with partial information.
We identify cases where any such algorithm must perform worse than what can
be attained without such information constraints. The tools developed allows
us to establish several results for specific learning problems:
\begin{itemize}
    \item We prove a new and generic regret lower bound for
        partial-information online learning with expert advice. The lower
        bound is $\Omega(\sqrt{(d/b)T})$, where $T$ is the number of
        rounds, $d$ is the dimension of the loss/reward vector, and $b$ is
        the number of bits $b$ extracted from each loss vector. It is
        optimal up to log-factors (without further assumptions), and holds
        no matter what these $b$ bits are -- a single coordinate (as in
        multi-armed bandits), some information on several coordinates
        (studied in various settings including semi-bandit feedback,
        bandits with side observations, and prediction with limited
        advice), a linear projection (as in bandit linear optimization),
        some feedback signal from a restricted set (as in partial
        monitoring) etc. Interestingly, it holds even if the online learner
        is allowed to adaptively choose which bits of the loss vector it
        can retain at each round. The lower bound quantifies in a very
        direct way how information constraints in online learning degrade
        the attainable regret, independent of the problem semantics.
    \item We prove that for some learning and estimation problems - in
        particular, sparse PCA and sparse covariance estimation in
        $\reals^d$ - no online algorithm can attain statistically optimal
        performance (in terms of sample complexity) with less than
        $\tilde{\Omega}(d^2)$ memory. To the best of our knowledge, this is
        the first formal example of a \emph{memory/sample complexity}
        trade-off in a statistical learning setting.
    \item We show that for similar types of problems, there are cases where
        no distributed algorithm (which is based on a non-interactive or
        serial protocol on i.i.d. data) can attain optimal performance with
        less than $\tilde{\Omega}(d^2)$ communication per machine. To the
        best of our knowledge, this is the first formal example of a
        \emph{communication/sample complexity} trade-off, in the regime
        where the communication budget is larger than the data dimension,
        and the examples at each machine come from the same underlying
        distribution.
    \item We demonstrate the existence of simple (toy) stochastic
        optimization problems where any algorithm which uses memory linear
        in the dimension (e.g. stochastic gradient descent or mirror
        descent) cannot be statistically optimal.
\end{itemize}

\subsection*{Related Work}

In stochastic optimization, there has been much work on lower bounds for
sequential algorithms, starting from the seminal work of \cite{YudNem83}, and
including more recent works such as \cite{agarwal2012information}.
\cite{raginsky2011information} also consider such lower bounds from a more
general information-theoretic perspective. However, these results all hold in
an \emph{oracle model}, where data is assumed to be made available in a
specific form (such as a stochastic gradient estimate). As already pointed
out in \cite{YudNem83}, this does not directly translate to the more common
setting, where we are given a dataset and wish to run a simple sequential
optimization procedure. Indeed, recent works exploited this gap to get
improved algorithms using more sophisticated oracles, such as the
availability of prox-mappings \cite{nesterov2005smooth}. Moreover, we are not
aware of cases where these lower bounds indicate a gap between the attainable
performance of any sequential algorithm and batch learning methods (such as
empirical risk minimization).

In the context of distributed learning and statistical estimation,
information-theoretic lower bounds have been recently shown in the pioneering
work \cite{ZDJW13}. Assuming communication budget constraints on different
machines, the paper identifies cases where these constraints affect
statistical performance. Our results (in the context of distributed learning)
are very similar in spirit, but there are two important differences. First,
they pertain to parametric estimation in $\reals^d$, where the communication
budget per machine is much smaller than what is needed to even specify the
answer ($\Ocal(d)$ bits). In contrast, our results pertain to simpler
detection problems, where the answer requires only $\Ocal(\log(d))$ bits, yet
lead to non-trivial lower bounds even when the budget size is much larger (in
some cases, much larger even than $d$). The second difference is that their
work focuses on distributed algorithms, while we address a more general class of algorithms,
which includes other information-constrained settings. Strong lower bounds in
the context of distributed learning have also been shown in
\cite{balcan2012distributed}, but they do not apply to a regime where
examples across machines come from the same distribution, and where the
communication budget is much larger than what is needed to specify the
output.

There are well-known lower bounds for multi-armed bandit problems and other
online learning with partial-information settings. However, they crucially
depend on the semantics of the information feedback considered. For example,
the standard multi-armed bandit lower bound \cite{auer2002nonstochastic}
pertain to a setting where we can view a single coordinate of the loss
vector, but doesn't apply as-is when we can view more than one coordinate (as
in semi-bandit feedback \cite{gyorgy2007line,audibert2011minimax}, bandits
with side observations \cite{mannor2011bandits}, or prediction with limited advice
\cite{seldin2014prediction}), receive a linear projection
(as in bandit linear optimization), or receive a different type of partial
feedback (such as in partial monitoring \cite{cesa2006prediction}). In
contrast, our results are generic and can directly apply to any such setting.

The inherent limitations of streaming and distributed algorithms, including
memory and communication constraints, have been extensively studied within
theoretical computer science (e.g.
\cite{alon1996space,bar2002information,chakrabarti2003near,muthukrishnan2005data,barak2010compress}).
Unfortunately, almost all these results consider tasks unrelated to learning,
and/or adversarially generated data, and thus do not apply to statistical
learning tasks, where the data is assumed to be drawn i.i.d. from some
underlying distribution. \cite{woodruff2009average,cr13} do consider i.i.d.
data, but focus on problems such as detecting graph connectivity and counting
distinct elements, and not learning problems such as those considered here.
On the flip side, there are works on memory-efficient algorithms with formal
guarantees for statistical problems (e.g.
\cite{mitliagkas2013memory,balsubramani2013fast,guha2007space,chien2010space}),
but these do not consider lower bounds or provable trade-offs.

Finally, there has been a line of works on hypothesis testing and statistical
estimation with finite memory (see
\cite{hellman1970learning,leighton1986estimating,ertin2003sequential,kontorovich2012statistical}
and references therein). However, the limitations shown in these works apply
when the required precision exceeds the amount of memory available, a regime
which is usually relevant only when the data size is exponential in the
memory size\footnote{For example, suppose we have $B$ bits of memory and try
to estimate the mean of a random variable in $[0,1]$. If we have $2^{4B}$
data points, then by Hoeffding's inequality, we can estimate the mean up to
accuracy $\Ocal(2^{-2B})$, but the finite memory limits us to an accuracy of
$2^{-B}$.}. In contrast, we do not rely on finite precision considerations.

\section{Information-Constrained Protocols}\label{sec:prelim}

We begin with a few words about notation. We use bold-face letters (e.g.
$\bx$) to denote vectors, and let $\be_j\in \reals^d$ denote $j$-th standard
basis vector. When convenient, we use the standard asymptotic notation
$\Ocal(\cdot),\Omega(\cdot),\Theta(\cdot)$ to hide constants, and an
additional $\tilde{}$ sign (e.g. $\tilde{\Ocal}(\cdot)$) to also hide
log-factors. $\log(\cdot)$ refers to the natural logarithm, and
$\log_2(\cdot)$ to the base-2 logarithm.

Our main object of study is the following generic class of information-constrained
algorithms:
\begin{definition}[$(b,n,m)$ Protocol]\label{def:pro}
Given access to a sequence of $mn$ i.i.d. instances (vectors in $\reals^d$),
an algorithm is a $(b,n,m)$ protocol if it has the following form, for some
functions $f_t$ returning an output of at most $b$ bits, and some function
$f$:
\begin{itemize}
    \item For $t=1,\ldots,m$
    \begin{itemize}
        \item Let $X^t$ be a batch of $n$ i.i.d. instances
        \item Compute message $W^t = f_t(X^t,W^1,W^2,\ldots W^{t-1})$
    \end{itemize}
    \item Return $W=f(W^1,\ldots,W^m)$
\end{itemize}
\end{definition}
Note that the functions $\{f_t\}_{t=1}^{m},f$ are completely arbitrary, may
depend on $m$ and can also be randomized. The crucial assumption is that the
outputs $W^t$ are constrained to be only $b$ bits.

At this stage, the definition above may appear quite abstract, so let us
consider a few specific examples:
\begin{itemize}
  \item $b$-memory online protocols: Consider any algorithm which goes over
      examples one-by-one, and incrementally updates a state vector $W^t$
      of bounded size $b$. We note that a majority of online learning and
      stochastic optimization algorithms have bounded memory. For example,
for linear predictors, most gradient-based algorithms maintain a state
whose size is proportional to the size of the parameter vector that is
being optimized. Such algorithms correspond to $(b,n,m)$ protocols where
$n=1$; $W^t$ is the state vector after round $t$, with an update function
$f_t$ depending only on $W^{t-1}$, and $f$ depends only on $W^m$.
\item \emph{Non-interactive and serial distributed algorithms:} There are
    $m$ machines and each machine receives an independent sample $X^t$ of
    size $n$. It then sends a message $W^t = f_t(X^t)$ (which here depends
    only on $X^t$). A centralized server then combines the messages to
    compute an output $f(W^1\ldots W^m)$. This includes for instance
    divide-and-conquer style algorithms proposed for distributed stochastic
    optimization (e.g. \cite{zhang2012communication,zhang2013divide}). A
    serial variant of the above is when there are $m$ machines, and
    one-by-one, each machine $t$ broadcasts some information $W^t$ to the
    other machines, which depends on $X^t$ as well as previous messages
    sent by machines $1,2,\ldots,(t-1)$.
\item \emph{Online learning with partial information:} This is a special
    case of $(b,1,m)$ protocols. We sequentially receive $d$-dimensional
    loss vectors, and from each of these we can extract and use only $b$
    bits of information, where $b\ll d$. For example, this includes most
    types of multi-armed bandit problems.
\item \emph{Mini-batch Online learning algorithms:} The data is streamed
    one-by-one or in mini-batches of size $n$, with $mn$ instances overall.
    An algorithm sequentially updates its state based on a $b$-dimensional
    vector extracted from each example/batch (such as a gradient or
    gradient average), and returns a final result after all data is
    processed. This includes most gradient-based algorithms we are aware
    of, but also distributed versions of these algorithms (such as
    parallelizing a mini-batch processing step as in
    \cite{dekel2012optimal,cotter2011better}).
\end{itemize}
We note that our results can be generalized to allow the size of the messages
$W^t$ to vary across $t$, and even to be chosen in a data-dependent manner.

In our work, we contrast the performance attainable by \emph{any} algorithm
corresponding to such protocols, to \emph{constraint-free} protocols which
are allowed to interact with the sampled instances in any manner.

\section{Basic Results}\label{sec:basic}

Our results are based on a simple `hide-and-seek' statistical estimation
problem, for which we show a strong gap between the attainable performance of
information-constrained protocols and constraint-free protocols. It is
parameterized by a dimension $d$, bias $\rho$, and sample size $mn$, and
defined as follows:
\begin{definition}[Hide-and-seek Problem]\label{def:prob1}
Consider the set of product distributions $\{\Pr_j(\cdot)\}_{j=1}^{d}$ over
$\{-1,1\}^d$ defined via $\E_{\bx\sim
\Pr_j(\cdot)}[x_i]=2\rho~\mathbf{1}_{i=j}$ for all coordinates
$i=1,\ldots d$. Given an i.i.d. sample of $mn$ instances generated from
$\Pr_j(\cdot)$, where $j$ is unknown, detect $j$.
\end{definition}
In words, ${\Pr}_j(\cdot)$ corresponds to picking all coordinates other than
$j$ to be $\pm 1$ uniformly at random, and independently picking coordinate
$j$ to be $+1$ with a higher probability $\left(\frac{1}{2}+\rho\right)$. The
goal is to detect the biased coordinate $j$ based on a sample.

First, we note that without information constraints, it is easy to detect the biased coordinate
with $\Ocal(\log(d)/\rho^2)$ instances. This is formalized in the following theorem,
which is an immediate consequence of Hoeffding's inequality and a union bound:
\begin{theorem}\label{thm:fullupbound}
Consider the hide-and-seek problem defined earlier. Given $mn$ samples, if
$\tilde{J}$ is the coordinate with the highest empirical average, then
\[
{\Pr}_{j}(\tilde{J}=j)\geq 1-2d\exp\left(-\frac{1}{2}mn\rho^2\right).
\]
\end{theorem}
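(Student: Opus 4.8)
The plan is to show that, with high probability, the empirical mean of the biased coordinate $j$ exceeds the threshold $\rho$, while the empirical mean of every other coordinate falls below $\rho$; on this event the coordinate $\tilde{J}$ with the largest empirical average is necessarily $j$. Fix the true biased coordinate $j$, and for each coordinate $i$ write $\hat{\mu}_i = \frac{1}{mn}\sum_{k=1}^{mn} x^{(k)}_i$ for its empirical average over the $mn$ i.i.d. instances $x^{(1)},\ldots,x^{(mn)}$. Under ${\Pr}_j$, the scalars $x^{(1)}_i,\ldots,x^{(mn)}_i$ are i.i.d., lie in $[-1,1]$, and have mean $\E[x_i] = 2\rho\,\mathbf{1}_{i=j}$.

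First I would control the "bad" event that coordinate $j$ looks small. Applying Hoeffding's inequality to the $mn$ i.i.d. bounded variables $x^{(k)}_j$ (whose range has width $2$),
\[
{\Pr}_j(\hat{\mu}_j \le \rho) = {\Pr}_j\!\left(\hat{\mu}_j - 2\rho \le -\rho\right) \le \exp\!\left(-\tfrac{1}{2}mn\rho^2\right).
\]
Symmetrically, for any fixed $i \ne j$, since $\E[x_i] = 0$,
\[
{\Pr}_j(\hat{\mu}_i \ge \rho) \le \exp\!\left(-\tfrac{1}{2}mn\rho^2\right).
\]
Next I would take a union bound over the single bad event for coordinate $j$ together with the $d-1$ bad events for the coordinates $i \ne j$. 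With probability at least $1 - d\exp(-\tfrac{1}{2}mn\rho^2) \ge 1 - 2d\exp(-\tfrac{1}{2}mn\rho^2)$, we simultaneously have $\hat{\mu}_j > \rho$ and $\hat{\mu}_i < \rho$ for all $i \ne j$, so $\hat{\mu}_j > \hat{\mu}_i$ for every $i \ne j$; hence the maximizer is $\tilde{J} = j$ (the strict inequalities also rule out ties on this event, so the tie-breaking rule is immaterial). This yields the claimed bound.

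There is essentially no hard step here — the statement is, as noted, an immediate consequence of Hoeffding plus a union bound. The points that require a bit of care are: (i) Hoeffding is applied coordinate-by-coordinate across the $mn$ samples, so the dependence structure among the $d$ coordinates within a single instance plays no role — each $\hat{\mu}_i$ is a function of $mn$ independent draws; and (ii) the constants, where the width-$2$ range $[-1,1]$ is what produces the factor $\tfrac{1}{2}$ in the exponent. The factor $2d$ rather than $d$ in the statement is merely slack, e.g. from applying a two-sided Hoeffding bound uniformly over all $d$ coordinates.
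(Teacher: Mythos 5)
Your proof is correct and is exactly the Hoeffding-plus-union-bound argument the paper invokes (the paper itself does not spell out a proof, stating only that the theorem is an immediate consequence of those two tools). Your one-sided bounds even give the slightly sharper factor $d$ in place of $2d$, and you correctly identify the extra factor $2$ as slack.
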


We now show that for this hide-and-seek problem, there is a large regime
where detecting $j$ is information-theoretically possible (by
\thmref{thm:fullupbound}), but any information-constrained protocol
will fail to do so with high probability.

We first show this for $(b,1,m)$ protocols (i.e. protocols which process one
instance at a time, such as bounded-memory online algorithms, and distributed
algorithms where each machine holds a single instance):
\begin{theorem}\label{thm:full1}
Consider the hide-and-seek problem on $d>1$ coordinates, with some bias
$\rho\leq 1/4$ and sample size $m$. Then for any estimate $\tilde{J}$ of the
biased coordinate returned by any $(b,1,m)$ protocol, there exists some
coordinate $j$ such that
\[
{\Pr}_{j}(\tilde{J}=j) \leq \frac{3}{d}+21\sqrt{\frac{m \rho^2 b}{d}}.
\]
\end{theorem}
The theorem implies that any algorithm corresponding to $(b,1,m)$
protocols requires sample size $m\geq \Omega((d/b)/\rho^2)$ to reliably detect some $j$.
When $b$ is polynomially smaller than $d$ (e.g. a constant), we get an
exponential gap compared to constraint-free protocols, which only require
$\Ocal(\log(d)/\rho^2)$ instances. Moreover, \thmref{thm:full1} is optimal up
to log-factors: Consider a $b$-memory online algorithm, which splits the $d$
coordinates into $\Ocal(d/b)$ segments of $\Ocal(b)$ coordinates each, and
sequentially goes over the segments, each time using
$\tilde{\Ocal}(1/\rho^2)$ independent instances to determine if one of the
coordinates in each segment is biased by $\rho$ (assuming $\rho$ is not
exponentially smaller than $b$, this can be done with $\Ocal(b)$ memory by
maintaining the empirical average of each coordinate). This will allow to
detect the biased coordinate, using $\tilde{\Ocal}((d/b)/\rho^2)$ instances.

We now turn to provide an analogous result for general $(b,n,m)$ protocols
(where $n$ is possibly greater than $1$). However, it is a bit weaker in
terms of the dependence on the bias parameter\footnote{The proof of
\thmref{thm:full1} can be applied in the case $n>1$, but the dependence on
$n$ is exponential - see the proof for details.}:
\begin{theorem}\label{thm:full}
Consider the hide-and-seek problem on $d>1$ coordinates, with some bias
$\rho\leq 1/4n$ and sample size $mn$. Then for any estimate $\tilde{J}$ of
the biased coordinate returned by any $(b,n,m)$ protocol, there exists some
coordinate $j$ such that
\[
{\Pr}_{j}(\tilde{J}= j) \leq
\frac{3}{d}+5\sqrt{mn \min\left\{\frac{10 \rho b}{d},\rho^2\right\}}.
\]
\end{theorem}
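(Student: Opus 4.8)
The plan is to reduce \thmref{thm:full} to an information-theoretic argument about how much a single $b$-bit message $W^t$ can discriminate between the $d$ possible biased coordinates. The natural quantity to control is the mutual information between the identity $J$ of the biased coordinate (taken uniform over $\{1,\dots,d\}$) and the transcript $W^1,\dots,W^m$. If this mutual information is small, then by Fano's inequality (or a direct argument) no estimator $\tilde J$ can recover $J$ with probability much better than $1/d$, which after averaging over $j$ yields the stated bound on $\min_j \Pr_j(\tilde J=j)$. So the crux is: \emph{bound $I(J; W^1,\dots,W^m)$ in terms of $mn$, $b$, $d$, and $\rho$.}

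First I would set up the uniform prior on $J$ and invoke the chain rule / subadditivity of mutual information to write $I(J; W^{1:m}) \le \sum_{t=1}^m I(J; W^t \mid W^{1:t-1})$, so it suffices to bound the per-round information gain by roughly $mn \cdot (\text{something like } \rho b/d)$ divided by $m$, i.e. $O(n\rho b/d)$ per round. For a fixed round, conditioned on the past messages, $W^t$ is a (randomized) function of the batch $X^t$ of $n$ i.i.d. coordinates-vectors in $\{-1,1\}^d$, whose law under $\Pr_j$ differs from the "null" product-of-fair-coins law only in coordinate $j$. The key estimate is that the contribution of a single extra bit of output, averaged over which coordinate is biased, is small because the bias lives in only one of $d$ coordinates and $X^t$ is otherwise symmetric: a $b$-bit function simply cannot simultaneously "listen to" more than $b$ coordinates' worth of signal, and each coordinate carries only $O(n\rho^2)$ bits of distinguishing information (KL between a $\mathrm{Bern}(1/2+\rho)^{\otimes n}$ and $\mathrm{Bern}(1/2)^{\otimes n}$ is $\Theta(n\rho^2)$ for small $\rho$). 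I would make this precise by a convexity/decoupling argument: condition on the past, let $P_0$ denote the distribution of $W^t$ under the all-fair-coins distribution, and bound $\frac1d\sum_j \mathrm{KL}(P_j \| P_0)$ (or the relevant $I(J;W^t)$-type quantity) using the fact that $W^t$ has at most $2^b$ values together with a per-coordinate tensorization of KL. This is exactly the place where the $b/d$ factor enters, and where the $\min\{10\rho b/d, \rho^2\}$ form comes from — the $\rho^2$ alternative is the trivial bound ignoring the bit constraint (each batch gives at most $O(n\rho^2)$ information regardless of $b$), and the $\rho b/d$ alternative is the improvement from the bit budget, which dominates when $b\ll d$. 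The hypothesis $\rho \le 1/(4n)$ is needed so that the per-coordinate likelihood ratios over a batch of $n$ samples stay bounded (ratio at most $((1/2+\rho)/(1/2-\rho))^n = O(1)$), letting one pass from KL/chi-square type bounds back to a clean linear-in-$n\rho$ estimate rather than an exponential-in-$n$ one — this is precisely why the theorem is "weaker in the bias dependence" than \thmref{thm:full1}.

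The technical heart, and the step I expect to be the main obstacle, is the single-message bound: showing that for \emph{any} $b$-bit (possibly randomized, possibly past-dependent) function $f_t$ of a batch of $n$ coordinate-symmetric samples, $\frac1d\sum_{j=1}^d \mathrm{KL}\!\big(\mathrm{Law}_j(W^t)\,\big\|\,\mathrm{Law}_0(W^t)\big) \lesssim n\,\min\{\rho b/d,\rho^2\}\cdot(\text{const})$, uniformly over the conditioning on $W^{1:t-1}$. The ingredients I would assemble: (i) the law of $X^t$ under $\Pr_j$ is the null law tilted in a single coordinate, so by a hybrid/coupling argument the "average over $j$" effectively divides by $d$; (ii) a bit-budget bound of the form "$I(J;W^t)\le$ (number of bits) $\times$ (max per-coordinate information)" — this is where one needs to be careful that adaptivity in choosing which bits to keep does not help, handled by taking a worst case over the conditioning; (iii) tensorization of KL over the $n$ samples in the batch, controlled by $\rho\le 1/(4n)$. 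Once the per-round bound is in hand, summing over $m$ rounds gives $I(J;W^{1:m}) \lesssim mn\min\{\rho b/d,\rho^2\}$, and then Fano's inequality translates this to $\frac1d\sum_j \Pr_j(\tilde J = j) \le \frac{1+I(J;W^{1:m})}{\log d}$ or, sharpened (to get the clean $\sqrt{\cdot}$ form rather than a $1/\log d$ factor), a Pinsker/Le Cam style averaging: $\frac1d\sum_j \Pr_j(\tilde J=j) \le \frac1d + (\text{const})\sqrt{\frac1d\sum_j \mathrm{KL}(\Pr_j\|\Pr_0 \text{ on transcript})}$, which yields the $3/d + 5\sqrt{mn\min\{10\rho b/d,\rho^2\}}$ bound after tracking constants. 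The existence of a single bad $j$ then follows since the minimum over $j$ is at most the average.
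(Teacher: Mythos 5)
Your high-level framework matches the paper's: introduce a reference null distribution $\Pr_0$, control an averaged KL/mutual-information quantity between the transcript under $\Pr_j$ and under $\Pr_0$, decompose by the chain rule into per-round contributions, bound each per-round contribution in two ways (a trivial $O(n\rho^2)$ bound that ignores the bit budget, and a $O(n\rho b/d)$ bound that exploits it), and finally convert the averaged KL to a probability bound by a Pinsker-plus-averaging step (the paper's Lemma~\ref{lem:totvar}) rather than Fano. You also correctly identify the role of the hypothesis $\rho\leq 1/4n$ in keeping likelihood ratios $O(1)$, and correctly recognize that the $\rho$ (rather than $\rho^2$) dependence in the bit-budget term is what makes this theorem weaker than Theorem~\ref{thm:full1}.

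However, there is a genuine gap at exactly the step you flag as ``the technical heart.'' Your proposed ingredients for the $n\rho b/d$ bound --- hybrid/coupling over $j$, a bit-budget bound of the form ``$b \times$ (max per-coordinate information),'' and tensorization of KL over the $n$ samples --- do not produce $n\rho b/d$. Naively combining the bit budget $b/d$ with the per-coordinate information $\Theta(n\rho^2)$ would give $n\rho^2 b/d$, which (as the paper notes in Section~\ref{sec:discussion}) is precisely the conjectured \emph{improvement} over Theorem~\ref{thm:full}, and is not known how to prove for $n>1$. Alternatively, pushing the chi-square route of Theorem~\ref{thm:full1} through for $n>1$ hits an unavoidable $\max_w \Pr_0(w\mid\bx_j,\hat w)/\Pr_0(w\mid\hat w)\le 2^n$ bound, producing the exponential $2^n$ factor that the paper explicitly wants to avoid here. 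The bounded likelihood ratio between $\Pr_0$ and $\Pr_j$ that you invoke does not control this ratio, which compares $\Pr_0$ conditioned on $\bx_j$ to $\Pr_0$ unconditioned. The paper's actual mechanism for obtaining $30n\rho b/d$ without the $2^n$ blow-up is a specific auxiliary randomization: introduce $\bv\in\{0,1\}^n$ with independent $\mathrm{Bern}(4\rho)$ entries, and redefine $\Pr_j$ so that coordinate $j$'s samples are fair coins where $v_l=0$ and constant-bias $3/4$ coins where $v_l=1$. Conditioning on $\bv$, the relevant likelihood ratios become $3^{|\bv|}$, and taking expectation over $\bv$ (with an indicator $\mathbf{1}_{\bv\neq\mathbf{0}}$, since $\bv=\mathbf{0}$ contributes nothing) yields the linear-in-$\rho$ factor $\E[\mathbf{1}_{\bv\neq 0}3^{|\bv|}]\leq 30n\rho$. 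Without this decoupling trick (or something like it), the plan as written either lands on the exponential bound or asserts the stronger conjectured result, so the proof does not go through.
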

The theorem implies that any $(b,n,m)$ protocol will require a sample size
$mn$ which is at least\\
$\Omega\left(\max\left\{\frac{(d/b)}{\rho},\frac{1}{\rho^2}\right\}\right)$
in order to detect the biased coordinate. This is larger than the
$\Ocal(\log(d)/\rho^2)$ instances required by constraint-free protocols
whenever $\rho > b\log(d)/d$, and establishes a trade-off between sample
complexity and information complexities such as memory and communication in
this regime.

The proofs of our theorems appear in Appendix \ref{app:proofs}. However, the
technical details may obfuscate the high-level intuition, which we now turn
to explain.

From an information-theoretic viewpoint, our results are based on analyzing
the mutual information between $j$ and $W^t$ in a graphical model as
illustrated in figure \ref{fig:channel}. In this model, the unknown message
$j$ (i.e. the identity of the biased coordinate) is correlated with one of
$d$ independent binary-valued random vectors (one for each coordinate across
the data instances $X^t$). All these random vectors are noisy, and the mutual
information in bits between $X^t_j$ and $j$ can be shown to be on the order
of $n\rho^2$. Without information constraints, it follows that given $m$
instantiations of $X^t$, the total amount of information conveyed on $j$ by
the data is $\Theta(mn\rho^2)$, and if this quantity is larger than
$\log(d)$, then there is enough information to uniquely identify $j$. Note
that no stronger bound can be established with standard statistical
lower-bound techniques, since these do not consider information constraints
internal to the algorithm used.

Indeed, in our information-constrained setting there is an added
complication, since the output $W^t$ can only contain $b$ bits. If $b\ll d$,
then $W^t$ cannot convey all the information on $X^t_1,\ldots,X^t_d$.
Moreover, it will likely convey only little information if it doesn't already
``know'' $j$. For example, $W^t$ may provide a little bit of information on
all $d$ random variables, but then the information conveyed on each (and in
particular, the random variable $X^t_j$ which is correlated with $j$) will be
very small. Alternatively, $W^t$ may provide accurate information on
$\Ocal(b)$ coordinates, but since the relevant random variable $X^t_j$ is not
known, it is likely to `miss' it. The proof therefore relies on the following
components:
\begin{itemize}
    \item No matter what, a $(b,n,m)$ protocol cannot provide more than
        $b/d$ bits of information (in expectation) on $X^t_j$, unless it
        already ``knows'' $j$.
    \item Even if the mutual information between $W^t$ and $X^t_j$ is only
        $b/d$, and the mutual information between $X^t_j$ and $j$ is
        $n\rho^2$, standard information-theoretic tools such as the data
        processing inequality only implies that the mutual information
        between $W^t$ and $j$ is bounded by $\min\{n\rho^2,b/d\}$. We
        essentially prove a stronger information contraction bound, which
        is the \emph{product} of the two terms $\Ocal(\rho^2 b/d)$ when
        $n=1$, and $\Ocal(n\rho b/d)$ for general $n$. At a technical
        level, this is achieved by considering the relative entropy between
        the distributions of $W^t$ with and without a biased coordinate
        $j$, relating it to the $\chi^2$-divergence between these
        distributions (using relatively recent analytic results on
        Csisz\'{a}r f-divergences \cite{Drag00}, \cite{TaKu04}), and
        performing algebraic manipulations to upper bound it by $\rho^2$
        times the mutual information between $W^t$ and $X^t_j$, which is on
        average $b/d$ as discussed earlier. This eventually leads to the
        $m\rho^2 b/d$ term in \thmref{thm:full1}, as well as
        \thmref{thm:full} using somewhat different calculations.
\end{itemize}

\begin{SCfigure}
\centering \caption{Illustration of the relationship between $j$, the
coordinates $1,2,\ldots,j,\ldots,d$ of the sample $X^t$, and the message
$W^t$. The coordinates are independent of each other, and most of them just
output $\pm 1$ uniformly at random. Only $X^t_j$ has a slightly different
distribution and hence contains some information on $j$.}
\includegraphics[scale=1.2]{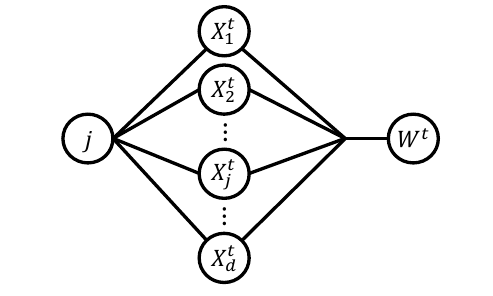}
\label{fig:channel}
\end{SCfigure}

\section{Applications}\label{sec:app}

\subsection{Online Learning with Partial Information}

Consider the standard setting of learning with expert advice, defined as a
game over $T$ rounds, where each round $t$ a loss vector $\ell_t \in [0,1]^d$
is chosen, and the learner (without knowing $\ell_t$) needs to pick an action
$i_t$ from a fixed set $\{1,\ldots,d\}$, after which the learner suffers loss
$\ell_{t,i_t}$. The goal of the learner is to minimize the regret in
hindsight to the any fixed action $i$,
$\sum_{t=1}^{T}\ell_{t,i_t}-\sum_{t=1}^{T}\ell_{t,i}$. We are interested in partial
information variants, where the learner doesn't get to see and use $\ell_t$,
but only some partial information on it. For example, in standard multi-armed
bandits, the learner can only view $\ell_{t,i_t}$.

The following theorem is a corollary of \thmref{thm:full1}, and we provide a
proof in Appendix \ref{app:bandits}.
\begin{theorem}\label{thm:bandits}
    Suppose $d>3$. For any $(b,1,T)$ protocol,
    there is an i.i.d. distribution over loss vectors $\ell_t\in [0,1]^d$ such that for some
    numerical constant $c$,
    \[
    \min_{j}\E\left[\sum_{t=1}^{T}\ell_{t,j_t}-\sum_{t=1}^{T}\ell_{t,j}\right]\geq
    c~\min\left\{T,\sqrt{\frac{d}{b}T}\right\}.
    \]
\end{theorem}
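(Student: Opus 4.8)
The plan is a reduction from the hide-and-seek problem to the expert-advice game: I would feed the online learner loss vectors manufactured from hide-and-seek samples, and then read off a guess for the biased coordinate from the learner's actions. Fix a coordinate $j$, let $\bx_1,\dots,\bx_T$ be i.i.d.\ from $\Pr_j(\cdot)$ of Definition~\ref{def:prob1}, and present the learner at round $t$ with the loss vector $\ell_t := \tfrac12(\mathbf{1}-\bx_t)\in\{0,1\}^d\subseteq[0,1]^d$. These are i.i.d.\ over $t$; the expected loss of action $i$ is $\tfrac12$ for $i\neq j$ and $\tfrac12-\rho$ for $i=j$, so action $j$ is the unique best fixed action, and the learner's expected regret against the best fixed action is $\E_j\!\left[\sum_{t=1}^T\ell_{t,j_t}\right]-T\bigl(\tfrac12-\rho\bigr)$.

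The crux is a clean formula for this regret in terms of how often the learner plays $j$. Since the $\ell_t$ are i.i.d., $\ell_t$ is independent of the action $j_t$, which in a $(b,1,T)$ protocol is a (possibly randomized) function of the past messages $W^1,\dots,W^{t-1}$ alone; hence $\E_j[\ell_{t,j_t}]=\tfrac12-\rho\,\Pr_j(j_t=j)$, and summing over $t$ the regret equals $\rho\bigl(T-\sum_{t=1}^T\Pr_j(j_t=j)\bigr)$. Now I would define an estimator of the biased coordinate by $\tilde J:=j_\tau$, with $\tau$ drawn uniformly from $\{1,\dots,T\}$ independently of everything else: since each $j_t$ depends only on $W^1,\dots,W^{t-1}$, the map $(W^1,\dots,W^T)\mapsto\tilde J$ is a legitimate randomized output function, so running the learner and returning $\tilde J$ is itself a $(b,1,T)$ protocol for hide-and-seek. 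Moreover $\Pr_j(\tilde J=j)=\tfrac1T\sum_{t=1}^T\Pr_j(j_t=j)$, so the regret equals exactly $\rho T\bigl(1-\Pr_j(\tilde J=j)\bigr)$.

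Next I would invoke \thmref{thm:full1} with $m=T$ (applicable since $d>3>1$ and we will take $\rho\le\tfrac14$): it produces a coordinate $j^\star$ with $\Pr_{j^\star}(\tilde J=j^\star)\le \tfrac3d+21\rho\sqrt{Tb/d}$. Choosing the loss distribution to be the one indexed by this $j^\star$, the expected regret is at least $\rho T\bigl(1-\tfrac3d-21\rho\sqrt{Tb/d}\bigr)$. Using $d>3$ so that $1-\tfrac3d\ge\tfrac14$, I would set $\rho=\min\{\tfrac14,\,c_0\sqrt{d/(Tb)}\}$ with $c_0$ an absolute constant chosen small enough that $21\rho\sqrt{Tb/d}\le\tfrac18$; then the parenthesized factor is at least $\tfrac18$ and the regret is of order $\rho T$. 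In the regime $T=\Ocal(d/b)$ (where this minimum equals $\tfrac14$) the bound is $\Omega(T)$, and in the regime $T=\Omega(d/b)$ it is $\Omega(\sqrt{(d/b)T})$; in all cases the regret is $\Omega\bigl(\min\{T,\sqrt{(d/b)T}\}\bigr)$, which is the claimed bound after absorbing the numerical constants into $c$.

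The one step that deserves care, rather than being a genuine obstacle, is the second: one must verify that the induced detection procedure honors the $b$-bit constraint (the actions $j_t$ are functions of the past messages, not of the raw loss vectors) and that it is precisely the i.i.d.\ structure that lets us decouple $\ell_t$ from $j_t$ and rewrite the regret via $\sum_t\Pr_j(j_t=j)$. Everything else is the coordinate transformation $\bx\mapsto\tfrac12(\mathbf{1}-\bx)$, a single application of \thmref{thm:full1}, and an elementary optimization over the scalar $\rho$, so I do not anticipate any real difficulty.
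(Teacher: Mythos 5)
Your proof is correct and follows the same overall reduction as the paper: encode the hide-and-seek distribution as i.i.d.\ $\{0,1\}^d$-valued loss vectors, invoke \thmref{thm:full1} with the same choice $\rho\asymp\min\{1/4,\sqrt{d/(Tb)}\}$, and conclude via a detector built from the learner's actions. The only real variation is how that detector is built: the paper argues by contradiction, applying Markov's inequality to the most commonly played arm, whereas you output $j_\tau$ at a uniformly random round $\tau$, which yields the exact identity $\text{regret}=\rho T\bigl(1-\Pr_j(\tilde J=j)\bigr)$ and a slightly tidier constant computation; both devices are standard and give the same asymptotic bound.
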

As a result, we get that for any algorithm with any partial information
feedback model (where $b$ bits are extracted from each $d$-dimensional loss
vector), it is impossible to get regret lower than $\Omega(\sqrt{(d/b)T})$
for sufficiently large $T$. Interestingly, this holds even if the algorithm
is allowed to examine each loss vector $\ell_t$ and choose which $b$ bits of
information it wishes to retain. In contrast, full-information algorithms
(e.g. Hedge \cite{freund1997decision}) can get $\Ocal(\sqrt{\log(d)T})$
regret. Without further assumptions on the feedback model, the bound is
optimal up to log-factors, as shown by $\Ocal(\sqrt{(d/b)T})$ upper bounds
for linear or coordinate measurements (where $b$ is the number of
measurements or coordinates seen\footnote{Strictly speaking, if the losses
are continuous-valued, these require arbitrary-precision measurements, but in
any practical implementation we can assume the losses and measurements are
discrete.}) \cite{agarwaloptimal,mannor2011bandits,seldin2014prediction}.
However, the lower bound is more general and applies to any partial feedback
model. For example, we immediately get an $\Omega(\sqrt{(d/k)T}$ regret lower
bound when we are allowed to view $k$ coordinates instead of $1$,
corresponding to (say) the semi-bandit feedback model
(\cite{cesa2012combinatorial}), the side-observation model of
\cite{mannor2011bandits} with a fixed upper bound $k$ on the number of
side-observations. In partial monitoring (\cite{cesa2006prediction}), we get
a $\Omega(d/k)$ lower bound where $k$ is the logarithm of the feedback matrix
width. In learning with partially observed attributes (e.g.
\cite{cesa2011efficient}), a simple reduction implies an
$\Omega(\sqrt{(d/k)T})$ lower bound when we are constrained to view at most
$k$ features of each example.

\subsection{Stochastic Optimization}\label{subsec:stochopt}

We now turn to consider an example from stochastic optimization, where our
goal is to approximately minimize $F(\bh)=\E_Z[f(\bh;Z)]$ given access to $m$
i.i.d. instantiations of $Z$, whose distribution is unknown. This setting has
received much attention in recent years, and can be used to model many
statistical learning problems. In this section, we show a stochastic
optimization problem where information-constrained protocols provably pay a
performance price compared to non-constrained algorithms. We emphasize that
it is going to be a very simple toy problem, and is not meant to represent
anything realistic. We present it for two reasons: First, it illustrates
another type of situation where information-constrained protocols may fail
(in particular, problems involving matrices). Second, the intuition of the
construction is also used in the more realistic problem of sparse PCA and
covariance estimation, considered in the next section.

The construction is as follows: Suppose we wish to solve
$\min_{(\bw,\bv)}F(\bw,\bv) = \E_{Z}[f((\bw,\bv);Z)]$, where
\[
f((\bw,\bv);Z)= \bw^\top Z \bv~~,~~ Z\in [-1,+1]^{d\times d}
\]
and $\bw,\bv$ range over all vectors in the simplex (i.e. $w_i,v_i\geq 0$ and
$\sum_{i=1}^{d}w_i=\sum_{i=1}^{d}v_i=1$). A minimizer of $F(\bw,\bv)$ is $(\be_{i^*},\be_{j^*})$,
where $(i^*,j^*)$ are indices of the matrix entry with minimal mean. Moreover,
by a standard concentration of measure argument, given $m$ i.i.d. instantiations
$Z^1,\ldots,Z^m$ from any distribution over $Z$, then the solution $(\be_{\tilde{I}},\be_{\tilde{J}})$,
where $(\tilde{I},\tilde{J})=\arg\min_{i,j}\frac{1}{m}\sum_{t=1}^{m}Z^t_{i,j}$ are the indices of the
entry with empirically smallest mean, satisfies
$F(\be_{\tilde{I}},\be_{\tilde{J}})\leq \min_{\bw,\bv}F(\bw,\bv)+\Ocal\left(\sqrt{\log(d)/m}\right)$
with high probability.

However, computing $(\tilde{I},\tilde{J})$ as above requires us to track
$d^2$ empirical means, which may be expensive when $d$ is large. If instead
we constrain ourselves to $(b,1,m)$ protocols where $b=\Ocal(d)$ (e.g. any
sort of stochastic gradient method optimization algorithm, whose memory is
linear in the number of parameters), then we claim that we have a lower bound
of $\Omega(\min\{1,\sqrt{d/m}\})$ on the expected error, which is much higher
than the $\Ocal(\sqrt{\log(d)/m})$ upper bound for constraint-free protocols.
This claim is a straightforward consequence of \thmref{thm:full1}: We
consider distributions where $Z\in \{-1,+1\}^{d\times d}$ with probability
$1$, each of the $d^2$ entries is chosen independently, and $\E[Z]$ is zero
except some coordinate $(i^*,j^*)$ where it equals $\Ocal(\sqrt{d/m})$. For
such distributions, getting optimization error smaller than
$\Ocal(\sqrt{d/m})$ reduces to detecting $(i^*,j^*)$, and this in turn
reduces to the hide-and-seek problem defined earlier, over $d^2$ coordinates
and a bias $\rho=\Ocal(\sqrt{d/m})$. However, \thmref{thm:full1} shows that
no $(b,1,m)$ protocol (where $b=\Ocal(d)$) will succeed if $md\rho^2\ll d^2$,
which indeed happens if $\rho$ is small enough.

Similar kind of gaps can be shown using \thmref{thm:full} for general $(b,n,m)$ protocols,
which apply to any special case such as non-interactive distributed learning.

\subsection{Sparse PCA, Sparse Covariance Estimation, and Detecting Correlations}

The sparse PCA problem (\cite{zou2006sparse}) is a standard and well-known
statistical estimation problem, defined as follows: We are given an i.i.d.
sample of vectors $\bx\in \reals^d$, and we assume that there is some
direction, corresponding to some \emph{sparse} vector $\bv$ (of cardinality
at most $k$), such that the variance $\E[(\bv^\top \bx)^2]$ along that
direction is larger than at any other direction. Our goal is to find that
direction.

We will focus here on the simplest possible form of this problem, where the
maximizing direction $\bv$ is assumed to be $2$-sparse, i.e. there are only
$2$ non-zero coordinates $v_i,v_j$. In that case, $\E[(\bv^\top \bx)^2] =
v_1^2\E[x_1^2]+v_2^2\E[x_2^2]+2v_1v_2\E[\bx_i \bx_j]$. Following previous
work (e.g. \cite{BerRig13}), we even assume that $\E[x_i^2]=1$ for all $i$,
in which case the sparse PCA problem reduces to detecting a coordinate pair
$(i^*,j^*)$, $i^*< j^*$ for which $x_{i^*},x_{j^*}$ are maximally correlated.
A special case is a simple and natural sparse covariance estimation problem
(\cite{bien2011sparse,cai2011adaptive}), where we assume that all covariates
are uncorrelated ($\E[x_i x_j]=0$) except for a unique correlated pair of
covariates $(i^*,j^*)$ which we need to detect.

This setting bears a resemblance to the example seen in the context of
stochastic optimization in section \ref{subsec:stochopt}: We have a $d\times
d$ stochastic matrix $\bx \bx^\top$, and we need to detect an off-diagonal
biased entry at location $(i^*,j^*)$. Unfortunately, these stochastic
matrices are rank-1, and do not have independent entries as in the example
considered in section \ref{subsec:stochopt}. Instead, we use a more delicate
construction, relying on distributions supported on sparse vectors. The
intuition is that then each instantiation of $\bx\bx^\top$ is sparse, and the
situation can be reduced to a variant of our hide-and-seek problem where only
a few coordinates are non-zero at a time. The theorem below establishes
performance gaps between constraint-free protocols (in particular, a simple
plug-in estimator), and any $(b,n,m)$ protocol for a specific choice of $n$,
or any $b$-memory online protocol (See \secref{sec:prelim}).

\begin{theorem}\label{thm:sparsePCA}
Consider the class of $2$-sparse PCA (or covariance estimation) problems in
$d\geq 9$ dimensions as described above, and all distributions such that:
\begin{enumerate}
  \item $\E[x_i^2]=1$ for all $i$.
  \item For a unique pair of distinct coordinates $(i^{*},j^{*})$, it holds
      that $\E[x_{i^{*}} x_{j^{*}}]=\tau>0$, whereas $\E[x_{i} x_{j}]=0$
      for all distinct coordinate pairs $(i,j)\neq (i^*,j^*)$.
  \item For any $i<j$, if $\widetilde{x_i x_j}$ is the empirical average of
      $x_i x_j$ over $m$ i.i.d. instances, then\\ $\Pr\left(|\widetilde{x_i
      x_j}-\E[x_i x_j]|\geq \frac{\tau}{2}\right) \leq
      2\exp\left(-m\tau^2/6\right)$.
\end{enumerate}
Then the following holds:
\begin{itemize}
    \item Let $(\tilde{I},\tilde{J})=\arg\max_{i<j}\widetilde{x_i x_j}$.
        Then for any distribution as above,
        $\Pr((\tilde{I},\tilde{J})=(i^*,j^*))\geq 1-d^2\exp(-m\tau^2/6)$.
        In particular, when the bias $\tau$ equals $\Theta(1/d\log(d))$,
        \[
        \Pr((\tilde{I},\tilde{J})=(i^*,j^*))\geq 1-d^2\exp\left(-\Omega\left(\frac{m}{d^2\log^2(d)}\right)\right). \]
    \item For any estimate $(\tilde{I},\tilde{J})$ of $(i^*,j^*)$ returned
        by any $b$-memory online protocol using $m$ instances, or any
        $(b,d(d-1),\lfloor \frac{m}{d(d-1)}\rfloor)$ protocol, there exists
        a distribution with bias $\tau=\Theta(1/d\log(d))$ as above such
        that
            \[
            \Pr\left((\tilde{I},\tilde{J})=(i^*,j^*)\right)\leq \Ocal\left(\frac{1}{d^2}+\sqrt{\frac{m}{d^4/b}}\right).
            \]
\end{itemize}
\end{theorem}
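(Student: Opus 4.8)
The plan is to handle the two bullets separately. \emph{First bullet (upper bound).} This is a routine concentration argument in the style of Theorem~\ref{thm:fullupbound}: by hypothesis~(3) and a union bound over the $\binom{d}{2}<d^{2}$ distinct pairs $i<j$, with probability at least $1-d^{2}\exp(-m\tau^{2}/6)$ we have $|\widetilde{x_{i}x_{j}}-\E[x_{i}x_{j}]|<\tau/2$ simultaneously for all pairs; on this event $\widetilde{x_{i^{*}}x_{j^{*}}}>\tau/2$ while $\widetilde{x_{i}x_{j}}<\tau/2$ for every other pair, so the empirical maximizer equals $(i^{*},j^{*})$. Substituting $\tau=\Theta(1/(d\log d))$ gives the displayed estimate. (That hypotheses~(1)--(3) are simultaneously satisfiable is witnessed by the family constructed next.)

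\emph{Second bullet (lower bound), construction.} For each candidate pair $(i^{*},j^{*})$ I would use the following distribution on $2$-sparse vectors: draw a pair $\{k,l\}$, $k<l$, uniformly among the $\binom{d}{2}$ pairs; set $x_{r}=0$ for $r\notin\{k,l\}$ and $x_{k},x_{l}\in\{\pm\sqrt{d/2}\}$ with signs independent and uniform, \emph{except} when $\{k,l\}=\{i^{*},j^{*}\}$, in which case the two signs are given correlation $\gamma:=(d-1)\tau$. A direct computation gives $\E[x_{r}^{2}]=(2/d)(d/2)=1$, $\E[x_{i}x_{j}]=0$ for every pair $\neq(i^{*},j^{*})$, and $\E[x_{i^{*}}x_{j^{*}}]=\gamma/(d-1)=\tau$, so hypotheses~(1)--(2) hold; and since $x_{i}x_{j}$ is supported on $\{0,\pm d/2\}$ with variance $O(1)$ while, for $\tau=\Theta(1/(d\log d))$ with a small enough constant, the range (linear) term in Bernstein's inequality is negligible, hypothesis~(3) holds as well. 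The point of the construction is that each sample reveals only the identity of one of the $N:=\binom{d}{2}=\Theta(d^{2})$ pairs together with one sign bit, which is unbiased unless that pair is $(i^{*},j^{*})$; thus detecting $(i^{*},j^{*})$ is exactly a sparse variant of the hide-and-seek problem of Definition~\ref{def:prob1} on $N$ coordinates.

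\emph{Second bullet, the bound.} A $b$-memory online protocol run on $m$ samples is a $(b,1,m)$ protocol, and grouping the first $\lfloor m/(d(d-1))\rfloor$ super-batches of $d(d-1)$ consecutive samples turns it into a $(b,d(d-1),\lfloor m/(d(d-1))\rfloor)$ protocol, so it suffices to treat the latter. For these I would re-run the information-theoretic argument behind Theorem~\ref{thm:full} instantiated for the sparse family: (i) the batch size $n=d(d-1)=\Theta(N)$ is chosen so that each of the $N$ pair-coordinates is touched a constant number of times in expectation within a batch; (ii) a $b$-bit message can therefore carry, in expectation, at most $O(b/N)=O(b/d^{2})$ bits about any particular pair-coordinate unless it already ``knows'' $(i^{*},j^{*})$; (iii) bounding the relative entropy between the law of $W^{t}$ with and without the planted pair via the $\chi^{2}$-divergence exactly as in the proof of Theorem~\ref{thm:full} yields $I(W^{t};(i^{*},j^{*}))=O(\gamma^{2}b/d^{2})$ per batch. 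Summing over the $\lfloor m/(d(d-1))\rfloor$ batches gives $\sum_{t}I(W^{t};(i^{*},j^{*}))=O\!\big(m\gamma^{2}b/d^{4}\big)=O\!\big(mb/(d^{4}\log^{2}d)\big)$, and the Fano-type conversion used for Theorem~\ref{thm:full1} turns this into $\Pr((\tilde I,\tilde J)=(i^{*},j^{*}))\le 3/N+O\!\big(\sqrt{mb/(d^{4}\log^{2}d)}\big)=O\!\big(1/d^{2}+\sqrt{mb/d^{4}}\big)$, as claimed; the case of a genuine $(b,d(d-1),\lfloor m/(d(d-1))\rfloor)$ protocol is identical.

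\emph{Main obstacle.} The delicate step is (iii): carrying out the $\chi^{2}$-divergence estimate for a \emph{batch} of $d(d-1)$ sparse samples (rather than for a single dense sample as in Theorem~\ref{thm:full1}, or a dense batch as in Theorem~\ref{thm:full}), and checking that the sparsity — each sample informs on only one pair — is exactly compensated by the batch size, so the per-batch information is $\Theta(\gamma^{2}b/d^{2})$ and not larger. One cannot simply invoke Theorems~\ref{thm:full1}--\ref{thm:full} as black boxes: reducing to dense hide-and-seek by ``filling in'' unobserved pairs with random signs loses a polynomial factor in the effective bias, while reducing with batch size $d(d-1)$ on the hide-and-seek side would require $\rho\le 1/(4\,d(d-1))$, incompatible with $\tau=\Theta(1/(d\log d))$ — which is precisely why the argument must be redone for the sparse family. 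The remaining bookkeeping is verifying hypothesis~(3) with the stated constant and absorbing the $O(d^{2})$ samples discarded when $m$ is not a multiple of $d(d-1)$.
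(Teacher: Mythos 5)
Your handling of the first bullet (the union-bound concentration argument) matches the paper exactly. For the second bullet, the high-level plan is also the paper's: a distribution on $2$-sparse vectors $\sqrt{d/2}(\sigma_1\be_i+\sigma_2\be_j)$ with signs correlated only at the planted pair; a reduction of the entries of $\bx\bx^\top$ above the diagonal to a sparse hide-and-seek problem on $N=\binom{d}{2}$ coordinates; viewing any $b$-memory online protocol as a $(b,\kappa,\lfloor m/\kappa\rfloor)$ protocol (the paper's Theorem~\ref{thm:boundedmem}) with $\kappa=d(d-1)$; and converting an average-KL bound into a detection-probability bound via Lemma~\ref{lem:totvar}. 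Your ``main obstacle'' paragraph also correctly rules out black-box use of Theorems~\ref{thm:full1}--\ref{thm:full} because their $\rho\le 1/4n$ hypothesis is violated at $n=d(d-1)$, $\rho=\Theta(1/\log d)$.

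The gap is in step (iii). You claim that running the $\chi^2$-divergence computation ``exactly as in the proof of Theorem~\ref{thm:full}'' yields a per-batch information bound of $O(\gamma^2 b/d^2)$. This is not what the paper proves, and in fact the paper's footnote in Appendix~\ref{app:sparsePCA} explicitly states that extracting a $\rho$-dependence for this sparse hide-and-seek problem ``seems technically complex'' and is not achieved. What the paper proves instead (Theorem~\ref{thm:sparse}) is a per-batch bound of $O(b/N)$ with \emph{no} $\gamma^2$ factor, under the much milder constraint $\rho\lesssim\min\{1,1/\log N,N/n\}$; this weaker bound still suffices to give the stated $\Ocal(1/d^2+\sqrt{mb/d^4})$. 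Moreover, the proof of Theorem~\ref{thm:sparse} is not a re-run of the dense argument: because a batch of $n=d(d-1)$ sparse samples distributes its hits over $N$ coordinates unevenly, bounding the likelihood-ratio factors $\max_{\bx_j}\Pr_0(\bx_j)/\Pr_j(\bx_j)$ requires conditioning on the allocation vector $\bu$ and controlling $\E_\bu[\exp(c\rho\max_j|\{i:e_i=j\}|)]$ via the balls-and-bins Lemma~\ref{lem:ballsbins} -- a genuinely new ingredient with no analogue in the proofs of Theorems~\ref{thm:full1} or~\ref{thm:full}. Your sketch names the right obstacle but asserts rather than resolves it, and the specific $\gamma^2$ rate you assert is one the paper could not obtain.

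A minor but real bookkeeping issue: you cite a correlation parameter $\gamma=(d-1)\tau$ while the paper's $\rho$ satisfies $\tau=2\rho/(d-1)$, i.e.\ $\gamma=2\rho$. This is merely a factor of two, but it means the constraint you need in the reduction is $\rho=\Theta(1/\log d)\le 1/(9\log N)$, which is tight; you should verify the constants so that Theorem~\ref{thm:sparse}'s hypotheses actually hold when $d\ge 9$, as the paper does.
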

The theorem implies that in the regime where $b\ll d^2/\log^2(d)$, we can
choose any $m$ such that $\frac{d^4}{b}\gg m \gg d^2\log^2(d)$, and get that
the chances of the protocol detecting $(i^*,j^*)$ are arbitrarily small, even
though the empirical average reveals $(i^*,j^*)$ with arbitrarily high
probability. Thus, in this sparse PCA / covariance estimation setting, any
online algorithm with sub-quadratic memory cannot be statistically optimal
for all sample sizes. The same holds for any $(b,n,m)$ protocol in an
appropriate regime of $(n,m)$, such as distributed algorithms as discussed
earlier.

To the best of our knowledge, this is the first result which explicitly shows
that \emph{memory} constraints can incur a statistical cost for a standard
estimation problem. It is interesting that sparse PCA was also shown recently
to be affected by \emph{computational} constraints on the algorithm's runtime
(\cite{BerRig13}).

The proof appears in Appendix \ref{app:sparsePCA}. Besides using a somewhat
different hide-and-seek construction as mentioned earlier, it also relies on
the simple but powerful observation that any $b$-memory online protocol is
also a $(b,\kappa,\lfloor m/\kappa \rfloor)$ protocol for arbitrary $\kappa$.
Therefore, we only need to prove the theorem for $(b,\kappa,\lfloor m/\kappa
\rfloor)$ for some $\kappa$ (chosen to equal $d(d-1)$ in our case) to automatically get the
same result for $b$-memory protocols.

The theorem shows a performance gap in the regime where $\frac{d^4}{b}\gg m
\gg d^2\log^2(d)$. However, it is possible to establish such gaps for similar
problems already when $m$ is linear in $b$ (up to log-factors) - see the
proof in Appendix \ref{app:sparsePCA} for more details. Also, the
distribution on which information-constrained protocols are shown to fail
satisfies the theorem's conditions, but is `spiky' and rather unnatural.
Proving a similar result for a `natural' data distribution (e.g. Gaussian)
remains an interesting open problem.

\section{Discussion and Open Questions}\label{sec:discussion}

In this paper, we investigated cases where a generic type of
information-constrained algorithm has strictly inferior statistical
performance compared to constraint-free algorithms. As special cases, we
demonstrated such gaps for memory-constrained and communication-constrained
algorithms (e.g. in the context of sparse PCA and covariance estimation), as
well as online learning with partial information and stochastic optimization.
These results are based on explicitly considering the information-theoretic
structure of the problem, and depend only on the number of bits extracted
from each data batch. We believe these results form a first step in a deeper
understanding of how information constraints affect learning ability.

Several questions remain open. One question is whether \thmref{thm:full} can
be improved. We conjecture this is true, and that the bound should actually
depend on $mn\rho^2 b/d$ rather than $mn\min\{\rho b/d,\rho^2\}$, and hold
for any $\rho\leq \Ocal(1/\sqrt{n})$. This would allow us, for instance, to show the same
type of performance gaps for $(b,1,m)$ protocols and $(b,n,m)$ protocols.

A second open question is whether there are convex stochastic optimization
problems, for which online or distributed algorithms are provably inferior to
constraint-free algorithms (the example discussed in section
\ref{subsec:stochopt} refers to an easily-solvable yet non-convex problem).
Due to the large current effort in developing scalable algorithms for convex
learning problems, this would establish that one must pay a statistical price
for using such memory-and-time efficient algorithms.

A third open question is whether the results for non-interactive (or serial)
distributed algorithms can be extended to more interactive algorithms, where
the different machines can communicate over several rounds. There is a rich
literature on the communication complexity of interactive distributed
algorithms within theoretical computer science, but it is not clear how to
`import' these results to a statistical setting based on i.i.d. data.

A fourth open question relates to the sparse-PCA / covariance estimation
result. The hardness result we gave for information-constrained protocols
uses a tailored distribution, which has a sufficiently controlled tail
behavior but is `spiky' and not sub-Gaussian uniformly in the dimension.
Thus, it would be interesting to establish similar hardness results for
`natural' distributions (e.g. Gaussian).

More generally, there is much work remaining in extending the results here to
other learning problems and other information constraints.

\subsubsection*{Acknowledgments}
This research is supported by the Intel ICRI-CI Institute, Israel Science
Foundation grant 425/13, and an FP7 Marie Curie CIG grant. We thank John
Duchi, Yevgeny Seldin and Yuchen Zhang for helpful comments.

\bibliographystyle{plain}
\bibliography{bib}

\begin{thebibliography}{10}

\bibitem{agarwal2012information}
A.~Agarwal, P.~Bartlett, P.~Ravikumar, and M.~Wainwright.
\newblock Information-theoretic lower bounds on the oracle complexity of
  stochastic convex optimization.
\newblock {\em Information Theory, IEEE Transactions on}, 58(5):3235--3249,
  2012.

\bibitem{agarwal2011reliable}
A.~Agarwal, O.~Chapelle, M.~Dud{\'\i}k, and J.~Langford.
\newblock A reliable effective terascale linear learning system.
\newblock {\em arXiv preprint arXiv:1110.4198}, 2011.

\bibitem{agarwaloptimal}
A.~Agarwal, O.~Dekel, and L.~Xiao.
\newblock Optimal algorithms for online convex optimization with multi-point
  bandit feedback.
\newblock In {\em COLT}, 2010.

\bibitem{alon1996space}
Noga Alon, Yossi Matias, and Mario Szegedy.
\newblock The space complexity of approximating the frequency moments.
\newblock In {\em STOC}, 1996.

\bibitem{arora2013stochastic}
R.~Arora, A.~Cotter, and N.~Srebro.
\newblock Stochastic optimization of pca with capped msg.
\newblock In {\em NIPS}, 2013.

\bibitem{audibert2011minimax}
J.-Y. Audibert, S.~Bubeck, and G.~Lugosi.
\newblock Minimax policies for combinatorial prediction games.
\newblock In {\em COLT}, 2011.

\bibitem{auer2002finite}
P.~Auer, N.~Cesa-Bianchi, and P.~Fischer.
\newblock Finite-time analysis of the multiarmed bandit problem.
\newblock {\em Machine learning}, 47(2-3):235--256, 2002.

\bibitem{auer2002nonstochastic}
P.~Auer, N.~Cesa-Bianchi, Y.~Freund, and R.~Schapire.
\newblock The nonstochastic multiarmed bandit problem.
\newblock {\em SIAM Journal on Computing}, 32(1):48--77, 2002.

\bibitem{balcan2012distributed}
M.~Balcan, A.~Blum, S.~Fine, and Y.~Mansour.
\newblock Distributed learning, communication complexity and privacy.
\newblock In {\em COLT}, 2012.

\bibitem{balsubramani2013fast}
A.~Balsubramani, S.~Dasgupta, and Y.~Freund.
\newblock The fast convergence of incremental pca.
\newblock In {\em NIPS}, 2013.

\bibitem{bar2002information}
Z.~Bar-Yossef, T.~Jayram, R.~Kumar, and D.~Sivakumar.
\newblock An information statistics approach to data stream and communication
  complexity.
\newblock In {\em FOCS}, 2002.

\bibitem{barak2010compress}
B.~Barak, M.~Braverman, X.~Chen, and A.~Rao.
\newblock How to compress interactive communication.
\newblock In {\em STOC}, 2010.

\bibitem{bartok2013partial}
G.~Bart{\'o}k, D.~Foster, D.~P{\'a}l, A.~Rakhlin, and C.~Szepesv{\'a}ri.
\newblock Partial monitoring -- classification, regret bounds, and algorithms.
\newblock 2013.

\bibitem{bekkerman2011scaling}
R.~Bekkerman, M.~Bilenko, and J.~Langford.
\newblock {\em Scaling up machine learning: Parallel and distributed
  approaches}.
\newblock Cambridge University Press, 2011.

\bibitem{BerRig13}
A.~Berthet and P.~Rigollet.
\newblock Complexity theoretic lower bounds for sparse principal component
  detection.
\newblock In {\em COLT}, 2013.

\bibitem{bien2011sparse}
J.~Bien and R.~Tibshirani.
\newblock Sparse estimation of a covariance matrix.
\newblock {\em Biometrika}, 98(4):807--820, 2011.

\bibitem{boyd2011distributed}
S.~Boyd, N.~Parikh, E.~Chu, B.~Peleato, and J.~Eckstein.
\newblock Distributed optimization and statistical learning via the alternating
  direction method of multipliers.
\newblock {\em Foundations and Trends{\textregistered} in Machine Learning},
  3(1):1--122, 2011.

\bibitem{bubeck2012regret}
S.~Bubeck and N.~Cesa-Bianchi.
\newblock Regret analysis of stochastic and nonstochastic multi-armed bandit
  problems.
\newblock {\em Foundations and Trends in Machine Learning}, 5(1):1--122, 2012.

\bibitem{cai2011adaptive}
T.~Cai and W.~Liu.
\newblock Adaptive thresholding for sparse covariance matrix estimation.
\newblock {\em Journal of the American Statistical Association}, 106(494),
  2011.

\bibitem{cesa2006prediction}
N.~Cesa-Bianchi and L.~Gabor.
\newblock {\em Prediction, learning, and games}.
\newblock Cambridge University Press, 2006.

\bibitem{cesa2012combinatorial}
N.~Cesa-Bianchi and G.~Lugosi.
\newblock Combinatorial bandits.
\newblock {\em Journal of Computer and System Sciences}, 78(5):1404--1422,
  2012.

\bibitem{cesa2011efficient}
N.~Cesa-Bianchi, S.~Shalev-Shwartz, and O.~Shamir.
\newblock Efficient learning with partially observed attributes.
\newblock {\em The Journal of Machine Learning Research}, 12:2857--2878, 2011.

\bibitem{chakrabarti2003near}
A.~Chakrabarti, S.~Khot, and X.~Sun.
\newblock Near-optimal lower bounds on the multi-party communication complexity
  of set disjointness.
\newblock In {\em CCC}, 2003.

\bibitem{chien2010space}
S.~Chien, K.~Ligett, and A.~McGregor.
\newblock Space-efficient estimation of robust statistics and distribution
  testing.
\newblock In {\em ICS}, 2010.

\bibitem{cotter2011better}
A.~Cotter, O.~Shamir, N.~Srebro, and K.~Sridharan.
\newblock Better mini-batch algorithms via accelerated gradient methods.
\newblock In {\em NIPS}, 2011.

\bibitem{cover2012elements}
T.~Cover and J.~Thomas.
\newblock {\em Elements of information theory}.
\newblock John Wiley \& Sons, 2006.

\bibitem{cr13}
M.~Crouch, A.~McGregor, and D.~Woodruff.
\newblock Stochastic streams: Sample complexity vs. space complexity.
\newblock In {\em MASSIVE}, 2013.

\bibitem{dekel2011optimal}
O.~Dekel, R.~Gilad-Bachrach, O.~Shamir, and L.~Xiao.
\newblock Optimal distributed online prediction.
\newblock In {\em ICML}, 2011.

\bibitem{dekel2012optimal}
O.~Dekel, R.~Gilad-Bachrach, O.~Shamir, and L.~Xiao.
\newblock Optimal distributed online prediction using mini-batches.
\newblock {\em The Journal of Machine Learning Research}, 13:165--202, 2012.

\bibitem{Drag00}
S.~S. Dragomir.
\newblock Upper and lower bounds for {C}sisz{\'a}r's f-divergence in terms of
  the {K}ullback-{L}eibler distance and applications.
\newblock In {\em Inequalities for {C}sisz{\'a}r f-Divergence in Information
  Theory}. RGMIA Monographs, 2000.

\bibitem{duchi2012dual}
J.~Duchi, A.~Agarwal, and M.~Wainwright.
\newblock Dual averaging for distributed optimization: convergence analysis and
  network scaling.
\newblock {\em Automatic Control, IEEE Transactions on}, 57(3):592--606, 2012.

\bibitem{ertin2003sequential}
E.~Ertin and L.~Potter.
\newblock Sequential detection with limited memory.
\newblock In {\em Statistical Signal Processing, 2003 IEEE Workshop on}, pages
  585--588, 2003.

\bibitem{freund1997decision}
Y.~Freund and R.~Schapire.
\newblock A decision-theoretic generalization of on-line learning and an
  application to boosting.
\newblock {\em Journal of Computer and System Sciences}, 55(1):119--139, 1997.

\bibitem{guha2007space}
S.~Guha and A.~McGregor.
\newblock Space-efficient sampling.
\newblock In {\em AISTATS}, 2007.

\bibitem{gyorgy2007line}
A.~Gy{\"o}rgy, T.~Linder, G.~Lugosi, and G.~Ottucs{\'a}k.
\newblock The on-line shortest path problem under partial monitoring.
\newblock {\em Journal of Machine Learning Research}, 8:2369--2403, 2007.

\bibitem{hellman1970learning}
M.~Hellman and T.~Cover.
\newblock Learning with finite memory.
\newblock {\em Annals of Mathematical Statistics}, pages 765--782, 1970.

\bibitem{kontorovich2012statistical}
L.~Kontorovich.
\newblock Statistical estimation with bounded memory.
\newblock {\em Statistics and Computing}, 22(5):1155--1164, 2012.

\bibitem{kyrola2011parallel}
A.~Kyrola, D.~Bickson, C.~Guestrin, and J.~Bradley.
\newblock Parallel coordinate descent for l1-regularized loss minimization.
\newblock In {\em ICML)}, 2011.

\bibitem{langford2007epoch}
J.~Langford and T.~Zhang.
\newblock The epoch-greedy algorithm for multi-armed bandits with side
  information.
\newblock In {\em NIPS}, 2007.

\bibitem{leighton1986estimating}
F.~Leighton and R.~Rivest.
\newblock Estimating a probability using finite memory.
\newblock {\em Information Theory, IEEE Transactions on}, 32(6):733--742, 1986.

\bibitem{li2010contextual}
L.~Li, W.~Chu, J.~Langford, and R.~Schapire.
\newblock A contextual-bandit approach to personalized news article
  recommendation.
\newblock In {\em WWW}, 2010.

\bibitem{mannor2011bandits}
S.~Mannor and O.~Shamir.
\newblock From bandits to experts: On the value of side-observations.
\newblock In {\em NIPS}, 2011.

\bibitem{mitliagkas2013memory}
I.~Mitliagkas, C.~Caramanis, and P.~Jain.
\newblock Memory limited, streaming pca.
\newblock In {\em NIPS}, 2013.

\bibitem{muthukrishnan2005data}
S.~Muthukrishnan.
\newblock {\em Data streams: Algorithms and applications}.
\newblock Now Publishers Inc, 2005.

\bibitem{YudNem83}
A.~Nemirovsky and D.~Yudin.
\newblock {\em Problem Complexity and Method Efficiency in Optimization}.
\newblock Wiley-Interscience, 1983.

\bibitem{nesterov2005smooth}
Y.~Nesterov.
\newblock Smooth minimization of non-smooth functions.
\newblock {\em Mathematical Programming}, 103(1):127--152, 2005.

\bibitem{niu2011hogwild}
F.~Niu, B.~Recht, C.~R{\'e}, and S.~Wright.
\newblock Hogwild: A lock-free approach to parallelizing stochastic gradient
  descent.
\newblock In {\em NIPS}, 2011.

\bibitem{raginsky2011information}
M.~Raginsky and A.~Rakhlin.
\newblock Information-based complexity, feedback and dynamics in convex
  programming.
\newblock {\em Information Theory, IEEE Transactions on}, 57(10):7036--7056,
  2011.

\bibitem{rahimi2007random}
A.~Rahimi and B.~Recht.
\newblock Random features for large-scale kernel machines.
\newblock In {\em NIPS}, 2007.

\bibitem{seldin2014prediction}
Y.~Seldin, P.~Bartlett, K.~Crammer, and Y.~Abbasi-Yadkori.
\newblock Prediction with limited advice and multiarmed bandits with paid
  observations.
\newblock In {\em ICML}, 2014.

\bibitem{shalev2011online}
S.~Shalev-Shwartz.
\newblock Online learning and online convex optimization.
\newblock {\em Foundations and Trends in Machine Learning}, 4(2):107--194,
  2011.

\bibitem{sra2011optimization}
S.~Sra, S.~Nowozin, and S.~Wright.
\newblock {\em Optimization for Machine Learning}.
\newblock Mit Press, 2011.

\bibitem{TaKu04}
I.~Taneja and P.~Kumar.
\newblock Relative information of type s, {C}sisz{\'a}r's f-divergence, and
  information inequalities.
\newblock {\em Inf. Sci.}, 166(1-4):105--125, 2004.

\bibitem{williams2001using}
C.~Williams and M.~Seeger.
\newblock Using the nystr{\"o}m method to speed up kernel machines.
\newblock In {\em NIPS}, 2001.

\bibitem{woodruff2009average}
D.~Woodruff.
\newblock The average-case complexity of counting distinct elements.
\newblock In {\em ICDT}, 2009.

\bibitem{ZDJW13}
Y.~Zhang, J.~Duchi, M.~Jordan, and M.~Wainwright.
\newblock Information-theoretic lower bounds for distributed statistical
  estimation with communication constraints.
\newblock In {\em NIPS}, 2013.

\bibitem{zhang2012communication}
Y.~Zhang, J.~Duchi, and M.~Wainwright.
\newblock Communication-efficient algorithms for statistical optimization.
\newblock In {\em NIPS}, 2012.

\bibitem{zhang2013divide}
Y.~Zhang, J.~Duchi, and M.~Wainwright.
\newblock Divide and conquer kernel ridge regression.
\newblock In {\em COLT}, 2013.

\bibitem{zou2006sparse}
H.~Zou, T.~Hastie, and R.~Tibshirani.
\newblock Sparse principal component analysis.
\newblock {\em Journal of computational and graphical statistics},
  15(2):265--286, 2006.

\end{thebibliography}

\appendix

\section{Proofs}\label{app:proofs}

The proofs use several standard quantities and results from information
theory -- see Appendix \ref{app:info} for more details. They also make use of
a several auxiliary lemmas (presented in \subsecref{subsec:auxiliary}),
including a simple but key lemma (\lemref{lem:infoexp}) which quantifies how
information-constrained protocols cannot provide information on all
coordinates simultaneously.

\subsection{Auxiliary Lemmas}\label{subsec:auxiliary}

\begin{lemma}\label{lem:totvar}
Suppose that $d>1$, and for some fixed distribution ${\Pr}_{0}(\cdot)$ over
the messages $w^1,\ldots,w^m$ computed by an information-constrained
protocol, it holds that
\[
\sqrt{\frac{2}{d}\sum_{j=1}^{d}D_{kl}\left({\Pr}_{0}(w^1\ldots w^m)\middle|\middle|{\Pr}_{j}(w^1\ldots w^m)\right)} \leq B.
\]
Then there exist some $j$ such that
\[
\Pr(\tilde{J}=j) \leq \frac{3}{d}+2B.
\]
\end{lemma}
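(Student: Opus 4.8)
The plan is to reduce the statement to a short averaging argument over the $d$ candidate biased coordinates, with Pinsker's inequality supplying the analytic input.

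First I would observe that the estimate $\tilde J$ is a (possibly randomized) function of the messages $w^1,\ldots,w^m$ alone, so for every $j\in\{1,\ldots,d\}$ the event $\{\tilde J=j\}$ obeys
\[
{\Pr}_j(\tilde J=j)-{\Pr}_0(\tilde J=j)\;\le\;\bigl\|{\Pr}_0(w^1,\ldots,w^m)-{\Pr}_j(w^1,\ldots,w^m)\bigr\|_{\mathrm{TV}}
\]
(any internal randomness of the protocol is already marginalized out in ${\Pr}_j(w^1,\ldots,w^m)$, and any internal randomness of $\tilde J$ is independent of $j$, so it can be absorbed by conditioning on it and does not affect the bound). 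Pinsker's inequality then bounds the right-hand side by $\sqrt{\half\,D_{kl}({\Pr}_0\|{\Pr}_j)}$, and the direction of the divergence here is exactly the one appearing in the hypothesis, so no symmetrization step is needed.

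Next I would average this inequality over $j=1,\ldots,d$, using two elementary facts. First, $\sum_{j=1}^{d}{\Pr}_0(\tilde J=j)\le 1$, since $\{\tilde J=1\},\ldots,\{\tilde J=d\}$ are disjoint events under the single distribution ${\Pr}_0$, so the averaged first term is at most $1/d$. Second, by concavity of the square root (Jensen, equivalently Cauchy--Schwarz),
\[
\frac1d\sum_{j}\sqrt{\tfrac12\,D_{kl}({\Pr}_0\|{\Pr}_j)}\;\le\;\sqrt{\tfrac{1}{2d}\sum_{j}D_{kl}({\Pr}_0\|{\Pr}_j)}\;=\;\tfrac12\sqrt{\tfrac{2}{d}\sum_{j}D_{kl}({\Pr}_0\|{\Pr}_j)}\;\le\;\tfrac B2 .
\]
Combining the two, $\frac1d\sum_{j=1}^{d}{\Pr}_j(\tilde J=j)\le \frac1d+\frac B2$, so some $j$ achieves ${\Pr}_j(\tilde J=j)\le\frac1d+\frac B2\le\frac3d+2B$, which is in fact slightly stronger than claimed; the looser constants in the statement merely leave room for a cruder pigeonhole version of the same argument if one prefers to avoid the Jensen step.

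There is no substantial obstacle in this lemma — it is the clean ``reduction to KL divergence'' that the heavier $\chi^2$- and $f$-divergence estimates elsewhere in the appendix feed into. The only points that need a moment of care are that the false-alarm mass $\sum_j{\Pr}_0(\tilde J=j)$ is controlled by $1$ rather than by something growing in $d$, and keeping track of the factor $\half$ in Pinsker so that the final constant lines up with the $B$ appearing under the square root in the hypothesis.
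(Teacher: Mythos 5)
Your proof is correct and slightly sharper than the paper's, and the route is genuinely different in one structural respect. Both proofs combine Pinsker's inequality with the concavity of the square root (Jensen), but the way the final ``existence of a good $j$'' is extracted differs. The paper first derives $\frac{1}{d}\sum_j|{\Pr}_0(\tilde J=j)-{\Pr}_j(\tilde J=j)|\le B$ and then runs a two-property pigeonhole: by Markov, at least $d/2$ indices have $|{\Pr}_0(\tilde J=j)-{\Pr}_j(\tilde J=j)|\le 2B$, at least $2d/3$ have ${\Pr}_0(\tilde J=j)\le 3/d$, and since $d/2+2d/3>d$ some index satisfies both, giving $\frac{3}{d}+2B$. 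You instead bound ${\Pr}_j(\tilde J=j)\le{\Pr}_0(\tilde J=j)+\mathrm{TV}({\Pr}_0,{\Pr}_j)$ directly and average; since $\sum_j{\Pr}_0(\tilde J=j)\le1$ and Jensen handles the TV terms, the average of ${\Pr}_j(\tilde J=j)$ is at most $\frac{1}{d}+\frac{B}{2}$, so some $j$ achieves it. This buys two improvements for free: you avoid the pigeonhole losses (hence $\frac{1}{d}$ rather than $\frac{3}{d}$, and $\frac{B}{2}$ rather than $2B$), and you use the total-variation form of Pinsker rather than the $L_1$ form, which is where the extra factor of $2$ on $B$ comes from. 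The paper's version is a little more robust in the sense that the pigeonhole would tolerate a constraint set $\tilde J\in S$ with $\sum_j{\Pr}_0(\tilde J=j)$ only bounded by a constant rather than by $1$, but for this lemma as stated your direct averaging is cleaner and yields a better constant. One small point worth keeping explicit, which you do handle, is that any internal randomness of $\tilde J$ must be treated as a randomized function of $w^1,\ldots,w^m$ alone (independent of $j$), so that $|{\Pr}_0(\tilde J=j)-{\Pr}_j(\tilde J=j)|$ is indeed bounded by the total-variation distance of the message distributions; this is the same observation the paper makes when it inserts $\Pr(\tilde J\mid w^1\ldots w^m)$ into the $L_1$ sum.
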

\begin{proof}
By concavity of the square root, we have
\[
\sqrt{\frac{2}{d}\sum_{j=1}^{d}D_{kl}\left({\Pr}_{0}(w^1\ldots w^m)\middle|\middle|{\Pr}_{j}(w^1\ldots w^m)\right)}
\geq \frac{1}{d}\sum_{j=1}^{d}\sqrt{2~D_{kl}\left({\Pr}_{0}(w^1\ldots w^m)\middle|\middle|{\Pr}_{j}(w^1\ldots w^m)\right)}.
\]
Using Pinsker's inequality and the fact that $\tilde{J}$ is some function of
the messages $w^1,\ldots,w^m$ (independent of the data distribution), this is
at least
\begin{align*}
&\frac{1}{d}\sum_{j=1}^{d}\sum_{w^1\ldots w^m}\left|{\Pr}_{0}(w^1\ldots w^m)-{\Pr}_{j}(w^1\ldots w^m)\right| \\
&\geq \frac{1}{d}\sum_{j=1}^{d}\left|\sum_{w^1\ldots w^m}\left({\Pr}_{0}(w^1\ldots w^m)-{\Pr}_{j}(w^1\ldots w^m)\right)\Pr\left(\tilde{J}|w^1\ldots w^m\right)\right|\\
&\geq \frac{1}{d}\sum_{j=1}^{d}|{\Pr}_{0}(\tilde{J}=j)-{\Pr}_{j}(\tilde{J}=j)|.
\end{align*}
Thus, we may assume that
\[
\frac{1}{d}\sum_{j=1}^{d}|{\Pr}_{0}(\tilde{J}=j)-{\Pr}_{j}(\tilde{J}=j)| \leq B.
\]
The argument now uses a basic variant of the probabilistic method. If the
expression above is at most $B$, then for at least $d/2$ values of $j$, it
holds that $|{\Pr}_{0}(\tilde{J=j})-{\Pr}_{j}(\tilde{J}=j)|\leq 2B$. Also,
since $\sum_{j=1}^{d}{\Pr}_{0}(\tilde{J}=j) = 1$, then for at least $2d/3$
values of $j$, it holds that ${\Pr}_{0}(\tilde{J}=j) \leq 3/d$. Combining the
two observations, and assuming that $d>1$, it means there must exist some
value of $j$ such that $|{\Pr}_{0}(\tilde{J})-{\Pr}_{j}(\tilde{J}=j)|\leq
2B$, \emph{as well as} ${\Pr}_{0}(\tilde{J}=j)\leq 3/d$, hence
${\Pr}_{j}(\tilde{J}=j)\leq \frac{3}{d}+2B$ as required.
\end{proof}

\begin{lemma}\label{lem:decomposition}
Let $p,q$ be distributions over a product domain $A_1\times A_2\times\ldots
\times A_d$, where each $A_i$ is a finite set. Suppose that for some $j\in
\{1,\ldots,d\}$, the following inequality holds for all
$\bz=(z_1,\ldots,z_d)\in A_1\times\ldots \times A_d$:
\[
  p(\{z_i\}_{i\neq j}|z_j)=q(\{z_i\}_{i\neq j}|z_j).
\]
Also, let $E$ be an event such that $p(E|\bz)=q(E|\bz)$ for all $\bz$. Then
\[
p(E) = \sum_{z_j}p(z_j)q(E|z_j).
\]
\end{lemma}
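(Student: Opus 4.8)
The plan is to expand $p(E)$ by the law of total probability over the full vector $\bz$, factor the joint law of $\bz$ under $p$ through coordinate $j$, and then use the two hypotheses to replace the $p$-quantities by the corresponding $q$-quantities. Concretely, I would first write
\[
p(E) \;=\; \sum_{\bz} p(\bz)\, p(E\mid \bz) \;=\; \sum_{z_j} p(z_j) \sum_{\{z_i\}_{i\neq j}} p(\{z_i\}_{i\neq j}\mid z_j)\, p(E\mid \bz),
\]
using the chain rule $p(\bz) = p(z_j)\, p(\{z_i\}_{i\neq j}\mid z_j)$, with the usual convention that terms where $p(z_j)=0$ are dropped (they contribute nothing, and any choice of conditional there is immaterial).

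Next I would invoke the two assumptions termwise. Since $p(E\mid\bz) = q(E\mid\bz)$ for every $\bz$, and $p(\{z_i\}_{i\neq j}\mid z_j) = q(\{z_i\}_{i\neq j}\mid z_j)$ for every $\bz$, the inner double sum is unchanged if we replace $p$ by $q$ throughout, giving
\[
p(E) \;=\; \sum_{z_j} p(z_j) \sum_{\{z_i\}_{i\neq j}} q(\{z_i\}_{i\neq j}\mid z_j)\, q(E\mid \bz).
\]
Finally, I would recognize the inner sum as exactly $q(E\mid z_j)$: this is just the law of total probability applied to the distribution $q$ conditioned on the value $z_j$ of the $j$-th coordinate, i.e. $q(E\mid z_j) = \sum_{\{z_i\}_{i\neq j}} q(\{z_i\}_{i\neq j}\mid z_j)\, q(E\mid\bz)$ by the definition of conditional probability. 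Substituting this back yields $p(E) = \sum_{z_j} p(z_j)\, q(E\mid z_j)$, the claimed identity.

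There is no substantive obstacle here — the lemma is a bookkeeping identity, and the subtlety is purely in being careful about what the symbols denote. The two points to watch are (i) the handling of values $z_j$ with $p(z_j)=0$, which simply drop out so that no convention matters, and (ii) making sure that $E$ (an event in the possibly randomized extended space generated by the protocol, whose conditional probability given $\bz$ is well-defined and, by hypothesis, distribution-independent) is treated correctly so that the repackaging of the $q$-terms into $q(E\mid z_j)$ is legitimate. The hypotheses are used in exactly one place: to swap $p$ for $q$ inside the sum without changing its value, after which the $q$-marginalization over $\{z_i\}_{i\neq j}$ does the rest.
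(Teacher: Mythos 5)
Your proof is correct and follows essentially the same route as the paper's: expand $p(E)$ by total probability over $\bz$, factor $p(\bz)$ through coordinate $j$, use the two hypotheses to replace the $p$-conditionals by the corresponding $q$-conditionals, and collapse the inner sum to $q(E\mid z_j)$. The only cosmetic difference is that you perform both substitutions simultaneously whereas the paper does them in two steps.
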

\begin{proof}
\begin{align*}
  p(E) &= \sum_{\bz}p(\bz)p(E|\bz) = \sum_{\bz}p(\bz)q(E|\bz)\\
  &= \sum_{z_j}p(z_j)\sum_{\{z_i\}_{i\neq j}}p(\{z_j\}_{i\neq j}|z_j)q(E|z_j,\{z_i\}_{i\neq j})\\
  &= \sum_{z_j}p(z_j)\sum_{\{z_i\}_{i\neq j}}q(\{z_j\}_{i\neq j}|z_j)q(E|z_j,\{z_i\}_{i\neq j})\\
  &= \sum_{z_j}p(z_j)q(E|z_j).
\end{align*}
\end{proof}

\begin{lemma}[\cite{Drag00}, Proposition 1] \label{lem:klreverse}
Let $p,q$ be two distributions on a discrete set, such that $\max_x
\frac{p(x)}{q(x)}\leq c$. Then
\[
D_{kl}\left(p(\cdot)||q(\cdot)\right) \leq c~ D_{kl}\left(q(\cdot)||p(\cdot)\right).
\]
\end{lemma}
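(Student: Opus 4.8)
The plan is to reduce the claimed inequality to a one-dimensional pointwise estimate about the likelihood ratio. First I would dispose of degenerate cases. The hypothesis $\max_x p(x)/q(x)\le c<\infty$ already forces $p\ll q$ (if $q(x)=0$ then $p(x)/q(x)$ would be infinite unless $p(x)=0$); and if $q\not\ll p$, i.e. there is $x_0$ with $q(x_0)>0=p(x_0)$, then $D_{kl}(q(\cdot)||p(\cdot))=+\infty$ and the claim is trivial. So I may assume $p$ and $q$ have the same support, and on it set $r(x):=p(x)/q(x)$, which satisfies $0<r(x)\le c$ and $\sum_x q(x)r(x)=\sum_x p(x)=1$.

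Next I would rewrite both divergences in terms of $q$ and $r$: $D_{kl}(p(\cdot)||q(\cdot))=\sum_x q(x)\,r(x)\log r(x)$ and $D_{kl}(q(\cdot)||p(\cdot))=-\sum_x q(x)\log r(x)$. Subtracting $c$ times the second from the first,
\[
D_{kl}(p(\cdot)||q(\cdot))-c\,D_{kl}(q(\cdot)||p(\cdot))=\sum_x q(x)\,(r(x)+c)\log r(x),
\]
so it suffices to show this sum is nonpositive.

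The key step is to introduce the auxiliary function $h(s):=(s+c)\log s-(c+1)(s-1)$ on $(0,c]$ and show $h\le 0$ there. I would check $h(1)=0$, $h'(s)=\log s+c/s-c$ with $h'(1)=0$, and $h''(s)=(s-c)/s^2\le 0$ on $(0,c]$; hence $h'$ is nonincreasing, so $h'\ge 0$ on $(0,1]$ and $h'\le 0$ on $[1,c]$, and $h$ attains its maximum over $(0,c]$ at $s=1$, where it equals $0$. Then, decomposing $(r(x)+c)\log r(x)=h(r(x))+(c+1)(r(x)-1)$ and summing against $q$, the term $(c+1)\sum_x q(x)(r(x)-1)$ vanishes because $\sum_x q(x)r(x)=1=\sum_x q(x)$, while $\sum_x q(x)h(r(x))\le 0$ since each $h(r(x))\le 0$. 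This gives $D_{kl}(p(\cdot)||q(\cdot))-c\,D_{kl}(q(\cdot)||p(\cdot))\le 0$, as required. I do not anticipate a genuine obstacle here; the only points needing mild care are the support/degeneracy bookkeeping and the elementary first/second-derivative verification that $h$ is nonpositive (which could alternatively be phrased as a tangent-line comparison using convexity of $s\mapsto (s+c)\log s$, but the explicit $h$ is cleanest).
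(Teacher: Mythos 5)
Your proof is correct. The paper cites this result from \cite{Drag00} without reproducing an argument, so there is nothing internal to compare against; your self-contained, elementary derivation fills that gap. The key move --- rewriting $D_{kl}(p(\cdot)||q(\cdot))-c\,D_{kl}(q(\cdot)||p(\cdot))=\sum_x q(x)\,(r(x)+c)\log r(x)$ with $r=p/q$, then bounding $(s+c)\log s$ by its tangent line $(c+1)(s-1)$ at $s=1$ via $h(1)=h'(1)=0$ and $h''(s)=(s-c)/s^2\le 0$ on $(0,c]$ --- is exactly right, and the two normalization constraints $\sum_x q(x)r(x)=\sum_x q(x)=1$ annihilate the linear correction term. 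The degenerate-case bookkeeping is also sound: the hypothesis forces $p\ll q$, and if $q\not\ll p$ the right-hand side is $+\infty$ while the left-hand side is at most $\log c<\infty$, so the claim is vacuous. One small point worth making explicit: you implicitly use $1\in(0,c]$ when locating the maximum of $h$ at $s=1$; this is harmless because $\sum_x q(x)r(x)=1$ forces $\max_x r(x)\ge 1$ and hence $c\ge 1$, but it deserves a sentence.
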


\begin{lemma}[\cite{Drag00}, Proposition 2 and Remark 4] \label{lem:klchi}
Let $p,q$ be two distributions on a discrete set, such that $\max_x
\frac{p(x)}{q(x)}\leq c$. Also, let $D_{\chi^2}(p(\cdot)||q(\cdot))=\sum_{x}\frac{(p(x)-q(x))^2}{q(x)}$
denote the $\chi^2$-divergence between the distributions $p,q$. Then
\[
D_{kl}\left(p(\cdot)||q(\cdot)\right) \leq D_{\chi^2}\left(p(\cdot)||q(\cdot)\right)
\leq 2c~ D_{kl}\left(p(\cdot)||q(\cdot)\right).
\]
\end{lemma}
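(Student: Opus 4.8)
The plan is to reduce both inequalities to elementary one‑variable estimates on the pointwise likelihood ratio. Since $\max_x p(x)/q(x)\le c<\infty$ forces $p(x)=0$ whenever $q(x)=0$, all sums below may be restricted to those $x$ with $q(x)>0$, with the usual conventions $0\log 0=0$; in particular everything in sight is finite, since each summand of $D_{\chi^2}$ is at most $q(x)\max\{1,(c-1)^2\}$. Set $t_x:=p(x)/q(x)\in[0,c]$. Then $D_{kl}(p\|q)=\sum_x q(x)\,t_x\log t_x$ and $D_{\chi^2}(p\|q)=\sum_x q(x)(t_x-1)^2$, and the two normalizations $\sum_x q(x)=1$, $\sum_x q(x)t_x=\sum_x p(x)=1$ give the identity $\sum_x q(x)(t_x-1)=0$. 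The point is that although neither of the desired inequalities holds termwise, each becomes true after adding a suitable multiple of $(t_x-1)$ to the larger side — and such a correction vanishes after multiplying by $q(x)$ and summing.

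For the left inequality I would invoke only $\log u\le u-1$: with $u=t_x$ this gives $t_x\log t_x\le t_x(t_x-1)=(t_x-1)^2+(t_x-1)$. Multiplying by $q(x)\ge 0$, summing over $x$, and cancelling the term $\sum_x q(x)(t_x-1)=0$ yields $D_{kl}(p\|q)\le D_{\chi^2}(p\|q)$; note that this half uses no boundedness assumption.

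For the right inequality I would first establish the scalar claim that for every $c\ge 1$ and every $t\in(0,c]$,
\[
(t-1)^2\ \le\ 2c\bigl(t\log t-t+1\bigr).
\]
Here $c\ge 1$ holds automatically, since $\max_x t_x\ge 1$ for any two probability distributions. To prove it, put $\psi(t):=2c(t\log t-t+1)-(t-1)^2$; then $\psi(1)=0$, $\psi'(t)=2c\log t-2(t-1)$ with $\psi'(1)=0$, and $\psi''(t)=2(c/t-1)\ge 0$ on $(0,c]$, so $\psi$ is convex there with a critical point at $t=1$ and hence $\psi\ge\psi(1)=0$ on $(0,c]$. Applying this with $t=t_x$, multiplying by $q(x)$, summing over $x$, and again using $\sum_x q(x)(t_x-1)=0$, we obtain $D_{\chi^2}(p\|q)\le 2c\sum_x q(x)\,t_x\log t_x=2c\,D_{kl}(p\|q)$.

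The only place that requires a little care is the choice of the correction term so that a termwise inequality holds while the aggregate is unchanged — for the right inequality this is precisely what drives the analysis of $\psi$ through its first two derivatives, and it is in $\psi''\ge 0$ that the hypothesis $t_x\le c$ is used — together with the boundary behaviour as $t\to 0^+$ (where $t\log t\to 0$) and the harmless reduction to $q(x)>0$. I do not expect any genuine obstacle beyond this bookkeeping: once the two scalar inequalities $\log u\le u-1$ and $(t-1)^2\le 2c(t\log t-t+1)$ are in hand, the lemma follows by linearity.
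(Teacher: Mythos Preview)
Your proof is correct. Note, however, that the paper does not actually prove this lemma: it is quoted from \cite{Drag00} (Proposition 2 and Remark 4) and stated without proof, so there is nothing in the paper to compare your argument against.

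As a standalone argument your route is clean and standard: you express both divergences as $q$-weighted sums of functions of the likelihood ratio $t_x=p(x)/q(x)$, then reduce each inequality to a scalar estimate that is exact up to an affine correction $(t_x-1)$ which integrates to zero. The left inequality uses only $\log u\le u-1$ (and indeed needs no boundedness), while the right inequality uses convexity of $\psi(t)=2c(t\log t-t+1)-(t-1)^2$ on $(0,c]$, which is precisely where the hypothesis $t_x\le c$ enters. Your handling of the edge cases ($q(x)=0$, $t_x=0$, and the observation that necessarily $c\ge 1$) is also fine. This is essentially the $f$-divergence viewpoint underlying the cited reference, so even though the paper offers no proof, your argument is in the same spirit one would expect from \cite{Drag00}.
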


\begin{lemma}\label{lem:ballsbins}
Suppose we throw $n$ balls independently and uniformly at random into $d>1$
bins, and let $K_1,\ldots K_d$ denote the number of balls in each of the $d$
bins. Then for any $\epsilon\geq 0$ such that $\epsilon \leq
\min\{\frac{1}{6},\frac{1}{2\log(d)},\frac{d}{3n}\}$, it holds that
\[
\E\left[\exp\left(\epsilon \max_j K_j\right)\right] < 13.
\]
\end{lemma}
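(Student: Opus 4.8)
The plan is to reduce the quantity $\E[\exp(\epsilon\max_j K_j)]$ to a one-dimensional moment-generating-function estimate, exploiting that each $K_j$ is marginally $\mathrm{Binomial}(n,1/d)$. The naive move — bounding $\max_j\exp(\epsilon K_j)$ by $\sum_j\exp(\epsilon K_j)$ before taking expectations — produces $d\,\E[\exp(\epsilon K_1)]$, whose leading factor $d$ cannot be absorbed into a constant. To temper it, I would use that for $p\ge 1$ and nonnegative reals $a_1,\dots,a_d$ one has $\max_j a_j=(\max_j a_j^p)^{1/p}\le(\sum_j a_j^p)^{1/p}$; taking $a_j=\exp(\epsilon K_j)$ gives
\[
\exp\!\bigl(\epsilon\max_j K_j\bigr)\;\le\;\Bigl(\textstyle\sum_{j=1}^{d}\exp(p\epsilon K_j)\Bigr)^{1/p}.
\]
Since $t\mapsto t^{1/p}$ is concave on $[0,\infty)$ for $p\ge 1$, Jensen's inequality then yields $\E[\exp(\epsilon\max_j K_j)]\le\bigl(\sum_{j}\E[\exp(p\epsilon K_j)]\bigr)^{1/p}=\bigl(d\,\E[\exp(p\epsilon K_1)]\bigr)^{1/p}$, so the union-bound factor now enters only as $d^{1/p}$.

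Next I would plug in the binomial MGF: writing $K_1=\sum_{i=1}^{n}B_i$ with $B_i$ i.i.d.\ $\mathrm{Bernoulli}(1/d)$, we have $\E[\exp(sK_1)]=(1+(e^s-1)/d)^n\le\exp(n(e^s-1)/d)$ by $1+y\le e^y$. Hence, for every $p\ge 1$,
\[
\E\bigl[\exp(\epsilon\max_j K_j)\bigr]\;\le\;d^{1/p}\exp\!\left(\frac{n(e^{p\epsilon}-1)}{pd}\right).
\]

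The final step is the choice $p=1/\epsilon$, which is legitimate since $\epsilon\le 1/6$ forces $p\ge 6\ge 1$ (and the degenerate case $\epsilon=0$ is trivial). Then $p\epsilon=1$, the exponent collapses to $n\epsilon(e-1)/d$, and $d^{1/p}=\exp(\epsilon\log d)$. The two remaining hypotheses do exactly what is needed: $\epsilon\le 1/(2\log d)$ gives $\epsilon\log d\le 1/2$, and $\epsilon\le d/(3n)$ gives $n\epsilon/d\le 1/3$, so
\[
\E\bigl[\exp(\epsilon\max_j K_j)\bigr]\;\le\;\exp\!\left(\frac12+\frac{e-1}{3}\right)\;<\;e^{1.08}\;<\;13 .
\]
There is no genuine obstacle in this argument; the one slightly non-obvious point — and the step where a first attempt is most likely to get stuck — is realizing that one must pass to the $p$-th power \emph{before} summing over bins, so that the $d$ shows up as $d^{1/p}$ and is then killed precisely by the hypothesis $\epsilon\le 1/(2\log d)$ under the choice $p=1/\epsilon$.
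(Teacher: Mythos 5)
Your proof is correct, and it takes a genuinely different route from the paper's. The paper proceeds by a tail-integration argument: it derives a multiplicative Chernoff bound $\Pr(K_j > (1+\gamma)n/d) \leq \exp(-\gamma^2 n/(2(1+\gamma)d))$, takes a union bound to control $\Pr(\max_j K_j > \tau n/d) \leq d\exp(-\tau n/3d)$ for $\tau\geq 6$, and then integrates the resulting tail of $\exp(\epsilon\max_j K_j)$ (splitting the integral at a carefully chosen threshold $c=\max\{8,d^{3\epsilon}\}$ and solving the power integral). Your argument avoids the tail decomposition and the integral entirely: the $L^p$-norm inequality $\max_j a_j \leq (\sum_j a_j^p)^{1/p}$ combined with concavity of $t\mapsto t^{1/p}$ reduces everything to a single Binomial MGF computation, and the choice $p=1/\epsilon$ makes the union-bound factor enter only as $d^{\epsilon}=\exp(\epsilon\log d)$, which the hypothesis $\epsilon\leq 1/(2\log d)$ kills directly. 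Both proofs exploit the same three hypotheses in the natural places, but yours is shorter, gives a considerably sharper constant (roughly $e^{1.08}\approx 2.9$ rather than $13$), and dispenses with the somewhat ad hoc thresholding in the paper's integration step. The only thing worth stating explicitly, which you handle only in passing, is that the $K_j$ are \emph{not} independent (they sum to $n$), but this is immaterial since you only use the marginal MGF of $K_1$ via linearity of expectation after summing over $j$.
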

\begin{proof}
Each $K_j$ can be written as $\sum_{i=1}^{n}\mathbf{1}(\text{ball $i$ fell
into bin $j$})$, and has expectation $n/d$. Therefore, by a standard
multiplicative Chernoff bound, for any $\gamma\geq 0$,
\[
\Pr\left(K_j> (1+\gamma)\frac{n}{d}\right)\leq \exp\left(-\frac{\gamma^2}{2(1+\gamma)}\frac{n}{d}\right).
\]
By a union bound, this implies that
\[
\Pr\left(\max_j K_j> (1+\gamma)\frac{n}{d}\right)\leq \sum_{j=1}^{d}
\Pr\left(K_j > (1+\gamma)\frac{n}{d}\right) \leq d\exp\left(-\frac{\gamma^2}{2(1+\gamma)}\frac{n}{d}\right).
\]
In particular, if $\gamma+1 \geq 6$, we can upper bound the above by the
simpler expression $\exp(-(1+\gamma)n/3d)$. Letting $\tau=\gamma+1$, we get
that for any $\tau\geq 6$,
\begin{equation}\label{eq:6bound}
\Pr\left(\max_j K_j > \tau \frac{n}{d}\right)\leq d\exp\left(-\frac{\tau n}{3d}\right).
\end{equation}
Define $c=\max\{8,d^{3\epsilon}\}$. Using the inequality above and the
non-negativity of $\exp(\epsilon \max_j K_j)$, we have
\begin{align*}
\E\left[\exp(\epsilon \max_j K_j)\right] &= \int_{t=0}^{\infty}\Pr\left(\exp(\epsilon \max_j K_j)\geq t\right)dt\\
&\leq c + \int_{t=c}^{\infty}\Pr\left(\exp(\epsilon \max_j K_j)\geq t\right)dt\\
&= c + \int_{t=c}^{\infty}\Pr\left(\max_j K_j \geq \frac{\log(t)}{\epsilon}\right)dt\\
&= c+ \int_{t=c}^{\infty}\Pr\left(\max_j K_j \geq \frac{\log(t)d}{\epsilon n}\frac{n}{d}\right)dt
\end{align*}
Since we assume $\epsilon\leq d/3n$ and $c\geq 8$, it holds that
$\exp(6\epsilon n/d)\leq \exp(2) < 8 \leq c$, which implies
$\log(c)d/\epsilon n \geq 6$. Therefore, for any $t\geq c$, it holds that
$\log(t)d/\epsilon n \geq 6$. This allows us to use \eqref{eq:6bound} to
upper bound the expression above by
\[
c+ d\int_{t=c}^{\infty}\exp\left(-\frac{\log(t)d}{3\epsilon n}\frac{n}{d}\right)dt
~=~
c+ d\int_{t=c}^{\infty}t^{-1/3\epsilon}dt.
\]
Since we assume $\epsilon \leq 1/6$, we have $1/(3\epsilon) \geq 2$, and
therefore we can solve the integration to get
\[
c+\frac{d}{\frac{1}{3\epsilon}-1}c^{1-\frac{1}{3\epsilon}}\leq  c+dc^{1-\frac{1}{3\epsilon}}.
\]
Using the value of $c$, and since $1-\frac{1}{3\epsilon}\leq -1$, this is at
most
\[
\max\{8,d^{3\epsilon}\}+d*\left(d^{3\epsilon}\right)^{1-\frac{1}{3\epsilon}}
~=~ \max\{8,d^{3\epsilon}\}+d^{3\epsilon}.
\]
Since $\epsilon \leq 1/2\log(d)$, this is at most
\[
\max\{8,\exp(3/2)\}+\exp(3/2)< 13
\]
as required.
\end{proof}

\begin{lemma}\label{lem:infoexp}
Let $Z_1,\ldots,Z_d$ be independent random variables, and let $W$ be a random
variable which can take at most $2^b$ values. Then
\[
\frac{1}{d}\sum_{j=1}^{d} I(W;Z_j) \leq \frac{b}{d}.
\]
\end{lemma}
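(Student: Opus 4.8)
The plan is to reduce the claim to two elementary information-theoretic facts: superadditivity of mutual information with respect to independent ``source'' variables, and the bound $H(W)\le b$ coming from the cardinality constraint on $W$.

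First I would establish that, because $Z_1,\ldots,Z_d$ are independent,
\[
\sum_{j=1}^{d} I(W;Z_j) \;\le\; I\bigl(W;Z_1,\ldots,Z_d\bigr).
\]
To see this, apply the chain rule to the right-hand side, writing $I(W;Z_1,\ldots,Z_d)=\sum_{j=1}^d I(W;Z_j\mid Z_1,\ldots,Z_{j-1})$. For each $j$, expand $I(W;Z_j\mid Z_1,\ldots,Z_{j-1}) = H(Z_j\mid Z_1,\ldots,Z_{j-1}) - H(Z_j\mid W,Z_1,\ldots,Z_{j-1})$. Independence of the $Z_i$'s gives $H(Z_j\mid Z_1,\ldots,Z_{j-1}) = H(Z_j)$, while ``conditioning reduces entropy'' gives $H(Z_j\mid W,Z_1,\ldots,Z_{j-1}) \le H(Z_j\mid W)$. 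Hence $I(W;Z_j\mid Z_1,\ldots,Z_{j-1}) \ge H(Z_j) - H(Z_j\mid W) = I(W;Z_j)$, and summing over $j$ yields the displayed inequality.

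Next I would bound the joint mutual information: $I(W;Z_1,\ldots,Z_d) \le H(W) \le \log_2(2^b) = b$ (measuring information in bits, as is implicit in the statement), using that $W$ takes at most $2^b$ values so its entropy is at most the logarithm of the number of values. Combining the two bounds gives $\sum_{j=1}^d I(W;Z_j)\le b$, and dividing by $d$ completes the proof.

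There is no real obstacle here; the only point requiring a moment's care is getting the direction of the superadditivity correct — it is the \emph{independence of the $Z_j$'s} (not of copies of $W$) that makes the sum of the marginal informations a \emph{lower} bound for the joint information, which is exactly the direction we need.
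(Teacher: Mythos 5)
Your proof is correct and follows essentially the same route as the paper's: both establish the key inequality $\sum_{j} I(W;Z_j) \le I(W;Z_1,\ldots,Z_d)$ using independence of the $Z_j$'s (you via the chain rule for mutual information, the paper via $\sum_j H(Z_j\mid W)\ge H(Z_1,\ldots,Z_d\mid W)$ together with $\sum_j H(Z_j)=H(Z_1,\ldots,Z_d)$ — the same fact in a different guise), and then bound $I(W;Z_1,\ldots,Z_d)\le H(W)\le b$.
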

\begin{proof}
We have
\[
\frac{1}{d}\sum_{j=1}^{d}I(W;Z_j) =
\frac{1}{d}\sum_{j=1}^{d}\left(H(Z_j)-H(Z_j|W)\right).
\]
Using the fact that $\sum_{j=1}^{d}{H}(Z_j|W) \geq H(Z_1\ldots,Z_d|W) $, this
is at most
\begin{align}
&\frac{1}{d}\sum_{j=1}^{d}H(Z_j)-\frac{1}{d}H(Z_1\ldots Z_d|W)\notag\\
&=~\frac{1}{d}\sum_{j=1}^{d}H(Z_j)
-\frac{1}{d}\left(H(Z_1\ldots Z_d)-I(Z_1\ldots Z_d;W)\right)\notag\\
&= \frac{1}{d}I(Z_1\ldots Z_d;W)+
\frac{1}{d}\left(\sum_{j=1}^{d}H(Z_j)
-H(Z_1\ldots Z_d)\right).\label{eq:diddid}
\end{align}
Since $Z_1\ldots Z_d$ are independent, $\sum_{j=1}^{d}H(Z_j)=H(Z_1\ldots
Z_d)$, hence the above equals
\[
\frac{1}{d}I(Z_1\ldots Z_d;W)=\frac{1}{d}\left(H(W)-H(W|Z_1\ldots Z_d)\right)\leq \frac{1}{d}H(W),
\]
which is at most $b/d$ since $W$ is only allowed to have $2^b$ values.
\end{proof}

\subsection{Proof of \thmref{thm:full1}}\label{subsec:thmfull1proof}

We will actually prove a more general result, stating that for any $(b,n,m)$ protocol,
\[
{\Pr}_{j}(\tilde{J}=j) \leq \frac{3}{d}+14.3\sqrt{mn 2^n \frac{\rho^2b}{d}}.
\]
The result stated in the theorem follows in the case $n=1$.

The proof builds on the auxiliary lemmas presented in Appendix
\ref{subsec:auxiliary}.

On top of the distributions ${\Pr}_{j}(\cdot)$ defined in the hide-and-seek
problem (Definition \ref{def:prob1}), we define an additional `reference'
distribution ${\Pr}_{0}(\cdot)$, which corresponds to the instances $\bx$
chosen uniformly at random from $\{-1,+1\}^d$ (i.e. there is no biased
coordinate).

Let $w^1,\ldots,w^m$ denote the messages computed by the protocol. It is enough to prove that
\begin{equation}\label{eq:11ppr0}
\frac{2}{d}\sum_{j=1}^{d}D_{kl}\left({\Pr}_{0}(w^1\ldots w^m)\middle|\middle|{\Pr}_{j}(w^1\ldots w^m)\right) \leq 51 mn2^n \rho^2 b/d,
\end{equation}
since then by applying \lemref{lem:totvar}, we get that for some $j$,
${\Pr}_{j}(\tilde{J}=j) \leq (3/d)+2\sqrt{51 mn2^n\rho^2 b/d} \leq
(3/d)+14.3\sqrt{mn2^n \rho^2 b/d}$ as required.

Using the chain rule, the left hand side in \eqref{eq:11ppr0} equals
\begin{align}
&\frac{2}{d}\sum_{j=1}^{d}\sum_{t=1}^{m}\E_{w^1\ldots w^{t-1}\sim {\Pr}_{0}}\left[D_{kl}\left({\Pr}_{0}(w^t|w^1\ldots w^{t-1})||{\Pr}_{j}(w^t|w^1\ldots w^{t-1})\right)\right]\notag\\
&=2\sum_{t=1}^{m}\E_{w^1\ldots w^{t-1}\sim {\Pr}_{0}}\left[\frac{1}{d}\sum_{j=1}^{d}D_{kl}\left({\Pr}_{0}(w^t|w^1\ldots w^{t-1})||{\Pr}_{j}(w^t|w^1\ldots w^{t-1})\right)\right]
\label{eq:11whattobound}
\end{align}
Let us focus on a particular choice of $t$ and values $w^1\ldots w^{t-1}$. To
simplify the presentation, we drop the $^t$ superscript from the message
$w^t$, and denote the previous messages $w^1\ldots w^{t-1}$ as $\hat{w}$.
Thus, we consider the quantity
\begin{equation}\label{eq:11i0}
\frac{1}{d}\sum_{j=1}^{d}D_{kl}\left({\Pr}_{0}(w|\hat{w})||{\Pr}_{j}(w|\hat{w})\right).
\end{equation}
Recall that $w$ is some function of $\hat{w}$ and a set of $n$ independent
instances received in the current round. Let $\bx_j$ denote the vector of
values at coordinate $j$ across these $n$ instances. Clearly, under $\Pr_j$,
every $\bx_i$ for $i\neq j$ is uniformly distributed on $\{-1,+1\}^n$,
whereas each entry of $\bx_j$ equals $1$ with probability $\frac{1}{2}+\rho$,
and $-1$ otherwise.

First, we argue that by \lemref{lem:decomposition}, for any $w,\hat{w}$, we have
\begin{equation}\label{eq:11decomp}
{\Pr}_{j}(w|\hat{w}) ={\Pr}_{0}(w|\hat{w}) \sum_{\bx_j}{\Pr}_{j}(\bx_j|\hat{\bw}) =
\sum_{\bx_j}{\Pr}_{0}(w|\hat{w}){\Pr}_{j}(\bx_j|\hat{\bw})
= \sum_{\bx_j}{\Pr}_{0}(w|\hat{w}){\Pr}_{j}(\bx_j).
\end{equation}
This follows by applying the lemma on
$p(\cdot)={\Pr}_{j}(\cdot|\hat{w})$,$q(\cdot)={\Pr}_{0}(\cdot|\hat{w})$
and $A_i = \{-1,+1\}^n$ (i.e. the vector of values at a single coordinate $i$
across the $n$ data points), and noting the $\bx_j$ is independent of $\hat{w}$.
The lemma's conditions are satisfied since
$\bx_i$ for $i\neq j$ has the same distribution under
${\Pr}_{0}(\cdot|\hat{w})$ and ${\Pr}_{j}(\cdot|\hat{w})$, and also $w$ is
only a function of $\bx_1\ldots \bx_d$ and $\hat{w}$.

Using \lemref{lem:klreverse} and \lemref{lem:klchi}, we have the following.
\begin{align}
  D_{kl}\left({\Pr}_{0}(w|\hat{w})||{\Pr}_{j}(w|\hat{w})\right) &\leq
  \max_{w}\left(\frac{{\Pr}_{0}(w|\hat{w})}{{\Pr}_{j}(w|\hat{w})}\right)
  D_{kl}\left({\Pr}_{j}(w|\hat{w})||{\Pr}_{0}(w|\hat{w})\right)\notag\\
&\leq
  \max_{w}\left(\frac{{\Pr}_{0}(w|\hat{w})}{{\Pr}_{j}(w|\hat{w})}\right)
  D_{\chi^2}\left({\Pr}_{j}(w|\hat{w})||{\Pr}_{0}(w|\hat{w})\right)\notag\\
  &= \max_{w}\left(\frac{{\Pr}_{0}(w|\hat{w})}{{\Pr}_{j}(w|\hat{w})}\right)
  \sum_w\frac{\left({\Pr}_{j}(w|\hat{w})-{\Pr}_{0}(w|\hat{w})\right)^2}{{\Pr}_{0}(w|\hat{w})}\label{eq:11dmaxd}
\end{align}

Let us consider the max term and the sum seperately. Using \eqref{eq:11decomp} and the fact that $\rho\leq 1/4n$, we have
\begin{align}
\max_{w}\left(\frac{{\Pr}_{0}(w|\hat{w})}{{\Pr}_{j}(w|\hat{w})}\right)
  &= \max_{w}\left(\frac{\sum_{\bx_j}{\Pr}_{0}(w|\bx_j,\hat{w}){\Pr}_{0}(\bx_j)}
  {\sum_{\bx_j}{\Pr}_{0}(w|\bx_j,\hat{w}){\Pr}_{j}(\bx_j)}\right)\notag\\
  &\leq \max_{\bx_j}\left(\frac{{\Pr}_{0}(\bx_j)}{{\Pr}_{j}(\bx_j)}\right)\notag\\
  &= \left(\frac{1/2}{1/2-\rho}\right)^n
  \leq (1+4\rho)^n \leq (1+1/n)^n \leq \exp(1).\label{eq:11dmax}
\end{align}

As to the sum term in \eqref{eq:11dmaxd}, using \eqref{eq:11decomp} and the Cauchy-Schwartz inequality, we have
\begin{align}
  \sum_w\frac{\left({\Pr}_{j}(w|\hat{w})-{\Pr}_{0}(w|\hat{w})\right)^2}{{\Pr}_{0}(w|\hat{w})}
  &=\sum_w\frac{\left(\sum_{\bx_j}{\Pr}_{0}(w|\bx_j,\hat{w})\left({\Pr}_{j}(\bx_j)-{\Pr}_{0}(\bx_j)\right)\right)^2}{{\Pr}_{0}(w|\hat{w})}\notag\\ &=\sum_w\frac{\left(\sum_{\bx_j}\left({\Pr}_{0}(w|\bx_j,\hat{w})-{\Pr}_{0}(w|\hat{w})\right)\left({\Pr}_{j}(\bx_j)-{\Pr}_{0}(\bx_j)\right)\right)^2}{{\Pr}_{0}(w|\hat{w})}\notag\\
  &\leq\sum_w\frac{\sum_{\bx_j}\left({\Pr}_{0}(w|\bx_j,\hat{w})-{\Pr}_{0}(w|\hat{w})\right)^2\sum_{\bx_j}\left({\Pr}_{j}(\bx_j)-{\Pr}_{0}(\bx_j)\right)^2}{{\Pr}_{0}(w|\hat{w})}\notag\\
  &=\sum_{\bx_j}\left({\Pr}_{j}(\bx_j)-{\Pr}_{0}(\bx_j)\right)^2\sum_{\bx_j}
  D_{\chi^2}\left({\Pr}_{0}(w|\bx_j,\hat{w})||{\Pr}_{0}(w|\hat{w})\right)\label{eq:11dd}.
\end{align}
where we used the definition of $\chi^2$-divergence as specified in \lemref{lem:klchi}.
Again, we will consider each sum separately. Applying \lemref{lem:klchi}
and \eqref{eq:11decomp}, we have
\begin{align}
D_{\chi^2}\left({\Pr}_{0}(w|\bx_j,\hat{w})||{\Pr}_{0}(w|\hat{w})\right)
&\leq 2\max_w\left(\frac{{\Pr}_{0}(w|\bx_j,\hat{w})}{{\Pr}_{0}(w|\hat{w})}\right)
D_{kl}\left({\Pr}_{0}(w|\bx_j,\hat{w})||{\Pr}_{0}(w|\hat{w})\right)\notag\\
&=2\max_w\left(\frac{{\Pr}_{0}(w|\bx_j,\hat{w})}{\sum_{\bx_j}{\Pr}_{0}(w|\bx_j,\hat{w}){\Pr}_{0}(\bx_j)}\right)
D_{kl}\left({\Pr}_{0}(w|\bx_j,\hat{w})||{\Pr}_{0}(w|\hat{w})\right)\notag\\
&=2\max_w\left(\frac{{\Pr}_{0}(w|\bx_j,\hat{w})}{\frac{1}{2^n}\sum_{\bx_j}{\Pr}_{0}(w|\bx_j,\hat{w})}\right)
D_{kl}\left({\Pr}_{0}(w|\bx_j,\hat{w})||{\Pr}_{0}(w|\hat{w})\right)\notag\\
&\leq 2^{n+1}D_{kl}\left({\Pr}_{0}(w|\bx_j,\hat{w})||{\Pr}_{0}(w|\hat{w})\right)\label{eq:11dd1}
\end{align}
Moreover, by definition of ${\Pr}_{0}$ and ${\Pr}_{j}$, and using the fact that each coordinate of $\bx_j$
takes values in $\{-1,+1\}$, we have
\begin{align}
&\sum_{\bx_j}\left({\Pr}_{j}(\bx_j)-{\Pr}_{0}(\bx_j)\right)^2 = \sum_{\bx_j}\left(\prod_{i=1}^{n}\left(\frac{1}{2}+\rho x_{j,i}\right)-\frac{1}{2^n}\right)^2\notag\\
&= \frac{1}{4^n}\sum_{\bx_j}\left(\prod_{i=1}^{n}\left(1+2\rho x_{j,i}\right)-1\right)^2
= \frac{1}{4^n}\sum_{\bx_j}\left(\prod_{i=1}^{n}\left(1+2\rho x_{j,i}\right)^2-2
\prod_{i=1}^{n}\left(1+2\rho x_{j,i}\right)+1\right)\notag\\
&= \frac{1}{4^n}\left(\prod_{i=1}^{n}\sum_{x_{j,i}}\left(1+2\rho x_{j,i}\right)^2
-2\prod_{i=1}^{n}\sum_{x_{j,i}}\left(1+2\rho x_{j,i}\right)+2^n\right)\notag\\
&= \frac{1}{4^n}\left((2+8\rho^2)^n-2^{n+1}+2^n\right)
= \frac{1}{2^n}\left((1+4\rho^2)^n-1\right)\notag\\
&= \frac{1}{2^n}\left(\left(1+\frac{4n\rho^2}{n}\right)^n-1\right)
\leq \frac{1}{2^n}\left(\exp(4n\rho^2)-1\right)\leq \frac{4.6}{2^n}n\rho^2, \label{eq:11dd2}
\end{align}
where in the last inequality we used the fact that $4n\rho^2\leq
4n(1/4n)^2\leq 0.25$, and $\exp(x)\leq 1+1.14 x$ for any $x\in [0,0.25]$.
Plugging in \eqref{eq:11dd1} and \eqref{eq:11dd2} back into \eqref{eq:11dd},
we get that
\[
\sum_w\frac{\left({\Pr}_{j}(w|\hat{w})-{\Pr}_{0}(w|\hat{w})\right)^2}{{\Pr}_{0}(w|\hat{w})}\leq
9.2 n\rho^2~\sum_{\bx_j}D_{kl}\left({\Pr}_{0}(w|\bx_j,\hat{w})||{\Pr}_{0}(w|\hat{w})\right).
\]
Plugging this in turn, together with \eqref{eq:11dmax}, into \eqref{eq:11dmaxd}, we get overall that
\[
D_{kl}\left({\Pr}_{0}(w|\hat{w})||{\Pr}_{j}(w|\hat{w})\right)\leq
9.2\exp(1) n\rho^2~\sum_{\bx_j}D_{kl}\left({\Pr}_{0}(w|\bx_j,\hat{w})||{\Pr}_{0}(w|\hat{w})\right).
\]
This expression can be equivalently written as
\begin{align*}
9.2&\exp(1)n2^n\rho^2~\sum_{\bx_j}\frac{1}{2^n} D_{kl}\left({\Pr}_{0}(w|\bx_j,\hat{w})||{\Pr}_{0}(w|\hat{w})\right)\\
&= 9.2\exp(1)n2^n \rho^2~\sum_{\bx_j}{\Pr}_{0}(\bx_j|\hat{w})D_{kl}\left({\Pr}_{0}(w|\bx_j,\hat{w})||{\Pr}_{0}(w|\hat{w})\right)\\
&= 9.2\exp(1)n 2^n \rho^2 I_{{\Pr}_{0}(\cdot|\hat{w})}(w;\bx_j)\\
\end{align*}
where $I_{{\Pr}_{0}(\cdot|\hat{w})}(w;\bx_j)$ denotes the mutual information
between $w$ and $\bx_j$, under the (uniform) distribution on $\bx_j$ induced
by ${\Pr}_{0}(\cdot|\hat{w})$. This allows us to upper bound \eqref{eq:11i0}
as follows:
\[
\frac{1}{d}\sum_{j=1}^{d}D_{kl}\left({\Pr}_{0}(w|\hat{w})||{\Pr}_{j}(w|\hat{w})\right)
~\leq~
9.2\exp(1)n 2^n \rho^2 \frac{1}{d}\sum_{j=1}^{d}I_{{\Pr}_{0}(\cdot|\hat{w})}(w;\bx_j).
\]
Since $\bx_1,\ldots,\bx_d$ are independent of each other and $w$ contains at most $b$ bits, we can use the key  \lemref{lem:infoexp} to upper bound the above by $9.2\exp(1)n 2^n \rho^2 b/d$.

To summarize, this expression constitutes an upper bound on \eqref{eq:11i0},
i.e. on any individual term inside the expectation in
\eqref{eq:11whattobound}. Thus, we can upper bound \eqref{eq:11whattobound}
by $18.4\exp(1) m n 2^n\rho^2 b/d ~<~  51 mn 2^n \rho^2 b/d$. This shows that
\eqref{eq:11ppr0} indeed holds, which as explained earlier implies the
required result.

\subsection{Proof of \thmref{thm:full}}\label{subsec:thmfullproof}

The proof builds on the auxiliary lemmas presented in Appendix
\ref{subsec:auxiliary}. It begins similarly to the proof of
\thmref{thm:full1}, but soon diverges.

On top of the distributions ${\Pr}_{j}(\cdot)$ defined in the hide-and-seek
problem (Definition \ref{def:prob1}), we define an additional `reference'
distribution ${\Pr}_{0}(\cdot)$, which corresponds to the instances $\bx$
chosen uniformly at random from $\{-1,+1\}^d$ (i.e. there is no biased
coordinate).

Let $w^1,\ldots,w^m$ denote the messages computed by the protocol. To show
the upper bound, it is enough to prove that
\begin{equation}\label{eq:ppr0}
\frac{2}{d}\sum_{j=1}^{d}D_{kl}\left({\Pr}_{0}(w^1\ldots w^m)\middle|\middle|{\Pr}_{j}(w^1\ldots w^m)\right) \leq \min\left\{60\frac{mn\rho b}{d},6mn\rho^2\right\}
\end{equation}
since then by applying \lemref{lem:totvar}, we get that for some $j$,
${\Pr}_{j}(\tilde{J}=j) \leq (3/d)+2\sqrt{\min\{60 mn\rho b/d,6mn\rho^2\}} \leq
(3/d)+5\sqrt{mn \min\{10\rho b/d,\rho^2\}}$ as required.

Using the chain rule, the left hand side in \eqref{eq:ppr0} equals
\begin{align}
&\frac{2}{d}\sum_{j=1}^{d}\sum_{t=1}^{m}\E_{w^1\ldots w^{t-1}\sim {\Pr}_{0}}\left[D_{kl}\left({\Pr}_{0}(w^t|w^1\ldots w^{t-1})||{\Pr}_{j}(w^t|w^1\ldots w^{t-1})\right)\right]\notag\\
&=2\sum_{t=1}^{m}\E_{w^1\ldots w^{t-1}\sim {\Pr}_{0}}\left[\frac{1}{d}\sum_{j=1}^{d}D_{kl}\left({\Pr}_{0}(w^t|w^1\ldots w^{t-1})||{\Pr}_{j}(w^t|w^1\ldots w^{t-1})\right)\right]
\label{eq:whattobound}
\end{align}
Let us focus on a particular choice of $t$ and values $w^1\ldots w^{t-1}$. To
simplify the presentation, we drop the $^t$ superscript from the message
$w^t$, and denote the previous messages $w^1\ldots w^{t-1}$ as $\hat{w}$.
Thus, we consider the quantity
\begin{equation}\label{eq:i0}
\frac{1}{d}\sum_{j=1}^{d}D_{kl}\left({\Pr}_{0}(w|\hat{w})||{\Pr}_{j}(w|\hat{w})\right).
\end{equation}
Recall that $w$ is some function of $\hat{w}$ and a set of $n$ independent
instances received in the current round. Let $\bx_j$ denote the vector of
values at coordinate $j$ across these $n$ instances. Clearly, under $\Pr_j$,
every $\bx_i$ for $i\neq j$ is uniformly distributed on $\{-1,+1\}^n$,
whereas each entry of $\bx_j$ equals $1$ with probability $\frac{1}{2}+\rho$,
and $-1$ otherwise.

We now show that \eqref{eq:i0} can be upper bounded in two different ways,
one bound being $30n\rho b/d$ and the other being $3n\rho^2$. Combining the two, we get
that
\begin{equation}
\frac{1}{d}\sum_{j=1}^{d}D_{kl}\left({\Pr}_{0}(w|\hat{w})||{\Pr}_{j}(w|\hat{w})\right)\leq
\min\left\{30\frac{n\rho b}{d},3n\rho^2 \right\}.
\end{equation}
Plugging this inequality back in \eqref{eq:whattobound}, we validate \eqref{eq:ppr0},
from which the result follows.

\subsubsection*{The $3n\rho^2$ bound}

This bound essentially follows only from the fact that $\bx_j$ is noisy, and not from the
algorithm's information constraints, and is thus easier to obtain.

First, we have by \lemref{lem:decomposition} that for any $w,\hat{w}$,
\[
{\Pr}_{j}(w|\hat{w}) = \sum_{\bx_j}{\Pr}_{0}(w|\hat{w}){\Pr}_{j}(\bx_j|\hat{\bw})
= \sum_{\bx_j}{\Pr}_{0}(w|\hat{w}){\Pr}_{j}(\bx_j)
\]
(this is the same as \eqref{eq:11decomp}, and the justification is the same).

Using this inequality, the definition of relative entropy, and the log-sum inequality, we have
\begin{align*}
\frac{1}{d}&\sum_{j=1}^{d}D_{kl}\left({\Pr}_{0}(w|\hat{w})||{\Pr}_{j}(w|\hat{w})\right)
= \frac{1}{d}\sum_{j=1}^{d}\sum_w {\Pr}_{0}(w|\hat{w})\log\left(\frac{{\Pr}_{0}(w|\hat{w})}
{{\Pr}_{j}(w|\hat{w})}\right)\\
&= \frac{1}{d}\sum_{j=1}^{d}\sum_w {\Pr}_{0}(w|\hat{w})\left(\sum_{\bx_j}{\Pr}_{0}(\bx_j)\right)\log\left(\frac{\sum_{\bx_j}{\Pr}_{0}(w|\bx_j,\hat{w}){\Pr}_{0}(\bx_j)
}{\sum_{\bx_j}{\Pr}_{0}(w|\bx_j,\hat{w}){\Pr}_{j}(\bx_j)}\right)\\
&\leq \frac{1}{d}\sum_{j=1}^{d}\sum_w {\Pr}_{0}(w|\hat{w})\sum_{\bx_j}{\Pr}_{0}(\bx_j)\log\left(\frac{{\Pr}_{0}(w|\bx_j,\hat{w}){\Pr}_{0}(\bx_j)
}{{\Pr}_{0}(w|\bx_j,\hat{w}){\Pr}_{j}(\bx_j)}\right)\\
&=\frac{1}{d}\sum_{j=1}^{d}\sum_w {\Pr}_{0}(w|\hat{w})\sum_{\bx_j}{\Pr}_{0}(\bx_j)\log\left(\frac{{\Pr}_{0}(\bx_j)
}{{\Pr}_{j}(\bx_j)}\right)\\
&=\frac{1}{d}\sum_{j=1}^{d}\sum_{\bx_j}{\Pr}_{0}(\bx_j)\log\left(\frac{{\Pr}_{0}(\bx_j)
}{{\Pr}_{j}(\bx_j)}\right)\\
&=\frac{1}{d}\sum_{j=1}^{d}D_{kl}\left({\Pr}_{0}(\bx_j)||{\Pr}_{j}(\bx_j)\right).
\end{align*}
This relative entropy is between the distribution of $n$ independent
Bernoulli trials with parameter $1/2$, and $n$ independent Bernoulli trials
with parameter $1/2+\rho$. This is easily verified to equal $n$ times the
relative entropy for a single trial, which equals (by definition of relative
entropy)
\[
\frac{1}{2}\log\left(\frac{1/2}{1/2-\rho}\right)+\frac{1}{2}\log\left(\frac{1/2}{1/2+\rho}\right) = -\frac{1}{2}\log\left(1-4\rho^2\right)\leq 8\log(4/3)\rho^2,
\]
where we used the fact that $\rho\leq 1/4n\leq 1/4$, and the inequality $-\log(1-x)\leq 4\log(4/3)x$ for $x\in [0,1/4]$.
Overall, we get that
\[
\frac{1}{d}\sum_{j=1}^{d}D_{kl}\left({\Pr}_{0}(w|\hat{w})||{\Pr}_{j}(w|\hat{w})\right) \leq 8\log(4/3)n\rho^2\leq 3n\rho^2.
\]

\subsubsection*{The $30n\rho b/d$ bound}

To prove this bound, it will be convenient for us to describe the sampling process of
$\bx_j$ in a slightly more complex way, as follows\footnote{We suspect that
this construction can be simplified, but were unable to achieve this without
considerably weakening the bound.}:
\begin{itemize}
\item We let $\bv\in \{0,1\}^n$ be an auxiliary random vector with
    independent entries, where each $v_i=1$ with probability $4\rho$, and
    $0$ otherwise.
\item Under $\Pr_0$ and $\Pr_i$ for $i\neq j$, we assume that $\bx_j$ is
    drawn uniformly from $\{-1,+1\}^n$ regardless of the value of $\bv$.
\item Under $\Pr_j$, we assume that each entry $x_{j,l}$ is independently
    sampled (in a manner depending on $\bv$) as follows:
\begin{itemize}
  \item For each $l$ such that $v_l=1$, we pick $x_{j,l}$ to be $1$ with
      probability $3/4$, and $-1$ otherwise.
  \item For each $l$ such that $v_l=0$, we pick $x_{j,l}$ to be $1$ or
      $-1$ with probability $1/2$.
\end{itemize}
\end{itemize}
Note that this induces the same distribution on $\bx_j$ as before: Each
individual entry $x_{j,l}$ is independent and satisfies
$\Pr_j(x_{j,l}=1)=4\rho*\frac{3}{4}+(1-4\rho)*\frac{1}{2} =
\frac{1}{2}+\rho$.

Having finished with these definitions, we re-write \eqref{eq:i0} as
\[
\frac{1}{d}\sum_{j=1}^{d}D_{kl}\left(\E_{\bv}\left[{\Pr}_{0}(w|\bv,\hat{w})\right]
||\E_{\bv}\left[{\Pr}_{j}(w|\bv,\hat{w})\right]\right).
\]
Since the relative entropy is jointly convex in its arguments, and $\bv$ is a
fixed random variable, we have by Jensen's inequality that this is at most
\[
\E_\bv\left[\frac{1}{d}\sum_{j=1}^{d}D_{kl}\left({\Pr}_{0}(w|\bv,\hat{w})
||{\Pr}_{j}(w|\bv,\hat{w})\right)\right].
\]
Now, note that if $\bv=\mathbf{0}$ (i.e. the zero-vector), then the
distribution of $\bx_1,\ldots,\bx_d$ is the same under both $\Pr_0$ and any
$\Pr_j$. Since $w$ is a function of $\bx_1,\ldots,\bx_d$, it follows that the
distribution of $\bw$ will be the same under both $\Pr_j$ and $\Pr_0$, and
therefore the relative entropy terms will be zero. Hence, we can trivially
re-write the above as
\begin{equation}\label{eq:i05}
\E_\bv\left[\mathbf{1}_{\bv\neq\mathbf{0}}\frac{1}{d}\sum_{j=1}^{d}D_{kl}\left({\Pr}_{0}(w|\bv,\hat{w})
||{\Pr}_{j}(w|\bv,\hat{w})\right)\right].
\end{equation}
where $\mathbf{1}_{\bv\neq\mathbf{0}}$ is an indicator function.

We can now use \lemref{lem:decomposition}, where
$p(\cdot)={\Pr}_{j}(\cdot|\bv,\hat{w})$,$q(\cdot)={\Pr}_{0}(\cdot|\bv,\hat{w})$
and $A_i = \{-1,+1\}^n$ (i.e. the vector of values at a single coordinate $i$
across the $n$ data points). The lemma's conditions are satisfied since
$\bx_i$ for $i\neq j$ has the same distribution under
${\Pr}_{0}(\cdot|\bv,\hat{w})$ and ${\Pr}_{j}(\cdot|\bv,\hat{w})$, and also
$w$ is only a function of $\bx_1\ldots \bx_d$ and $\hat{w}$. Thus, we can
rewrite \eqref{eq:i05} as
\[
\E_{\bv}\left[\mathbf{1}_{\bv\neq\mathbf{0}}\frac{1}{d}\sum_{j=1}^{d}D_{kl}\left({\Pr}_{0}(w|\bv,\hat{w})~\middle|\middle|~
\sum_{\bx_j}{\Pr}_{0}(w|\bx_j,\bv,\hat{w}){\Pr}_{j}(\bx_j|\bv,\hat{w})\right)\right].
\]
Using \lemref{lem:klreverse}, we can reverse the expressions in the relative
entropy term, and upper bound the above by
\begin{equation}\label{eq:prr}
\E_{\bv}\left[\mathbf{1}_{\bv\neq\mathbf{0}}
\frac{1}{d}\sum_{j=1}^{d}\left(\max_{w}\frac{{\Pr}_{0}(w|\bv,\hat{w})}{\sum_{\bx_j}{\Pr}_{0}(w|\bx_j,\bv,\hat{w}){\Pr}_{j}(\bx_j|\bv,\hat{w})}\right)
D_{kl}\left(\sum_{\bx_j}{\Pr}_{0}(w|\bx_j,\bv,\hat{w}){\Pr}_{j}(\bx_j|\bv,\hat{w})~\middle|\middle|~
{\Pr}_{0}(w|\bv,\hat{w})\right)\right].
\end{equation}
The max term equals
\[
\max_{w}\frac{\sum_{\bx_j}{\Pr}_{0}(w|\bx_j,\bv,\hat{w}){\Pr}_{0}(\bx_j|\bv,\hat{w})}{\sum_{\bx_j}{\Pr}_{0}(w|\bx_j,\bv,\hat{w}){\Pr}_{j}(\bx_j|\bv,\hat{w})}
\leq \max_{\bx_j}\frac{{\Pr}_{0}(\bx_j|\bv,\hat{w})}{{\Pr}_{j}(\bx_j|\bv,\hat{w})},
\]
and using Jensen's inequality and the fact that relative entropy is convex in
its arguments, we can upper bound the relative entropy term by
\begin{align*}
&\sum_{\bx_j}{\Pr}_{j}(\bx_j|\bv,\hat{w}) D_{kl}\left({\Pr}_{0}(w|\bx_j,\bv,\hat{w})~\middle|\middle|~
{\Pr}_{0}(w|\bv,\hat{w})\right)\\
&\leq \left(\max_{\bx_j}\frac{{\Pr}_{j}(\bx_j|\bv,\hat{w})}{{\Pr}_{0}(\bx_j|\bv,\hat{w})}\right)\sum_{\bx_j}{\Pr}_{0}(\bx_j|\bv,\hat{w}) D_{kl}\left({\Pr}_{0}(w|\bx_j,\bv,\hat{w})~\middle|\middle|~
{\Pr}_{0}(w|\bv,\hat{w})\right).
\end{align*}
The sum in the expression above equals the mutual information between the
message $w$ and the coordinate vector $\bx_j$ (seen as random variables with
respect to the distribution ${\Pr}_{0}(\cdot|\bv,\hat{w})$). Writing this as
$I_{{\Pr}_{0}(\cdot|\bv,\hat{w})}(w;\bx_j)$, we can thus upper bound
\eqref{eq:prr} by
\begin{align*}
&\E_{\bv}\left[\mathbf{1}_{\bv\neq\mathbf{0}}\frac{1}{d}\sum_{j=1}^{d}\left(\max_{\bx_j}\frac{{\Pr}_{0}(\bx_j|\bv,\hat{w})}{{\Pr}_{j}(\bx_j|\bv,\hat{w})}\right)
\left(\max_{\bx_j}\frac{{\Pr}_{j}(\bx_j|\bv,\hat{w})}{{\Pr}_{0}(\bx_j|\bv,\hat{w})}\right)I_{{\Pr}_{0}(\cdot|\bv,\hat{w})}(w;\bx_j)\right]\\
&\leq \E_{\bv}\left[\mathbf{1}_{\bv\neq\mathbf{0}}\left(\max_{j,\bx_j}\frac{{\Pr}_{0}(\bx_j|\bv,\hat{w})}{{\Pr}_{j}(\bx_j|\bv,\hat{w})}\right)
\left(\max_{j,\bx_j}\frac{{\Pr}_{j}(\bx_j|\bv,\hat{w})}{{\Pr}_{0}(\bx_j|\bv,\hat{w})}\right)\frac{1}{d}\sum_{j=1}^{d}I_{{\Pr}_{0}(\cdot|\bv,\hat{w})}(w;\bx_j)\right].
\end{align*}
Since $\{\bx_j\}_j$ are independent of each other and $w$ contains at most
$b$ bits, we can use the key \lemref{lem:infoexp} to upper bound the above by
\[
\E_{\bv}\left[\mathbf{1}_{\bv\neq\mathbf{0}}\left(\max_{j,\bx_j}\frac{{\Pr}_{0}(\bx_j|\bv,\hat{w})}{{\Pr}_{j}(\bx_j|\bv,\hat{w})}\right)
\left(\max_{j,\bx_j}\frac{{\Pr}_{j}(\bx_j|\bv,\hat{w})}{{\Pr}_{0}(\bx_j|\bv,\hat{w})}\right)\frac{b}{d}\right].
\]
Now, recall that for any $j$, $\bx_j$ refers to a column of $n$ independent
entries, drawn independently of any previous messages $\hat{w}$, where under
${\Pr}_{0}$, each entry $x_{j,i}$ is chosen to be $\pm 1$ with equal
probability, whereas under ${\Pr}_{j}$ each is chosen to be $1$ with
probability $\frac{3}{4}$ if $v_i=1$, and with probability $\frac{1}{2}$ if
$v_i=0$. Therefore, letting $|\bv|$ denote the number of non-zero entries in
$\bv$, we can upper bound the expression above by
\begin{equation}\label{eq:ooo}
\E_{\bv}\left[\mathbf{1}_{\bv\neq\mathbf{0}}
\left(\frac{1/2}{1/4}\right)^{|\bv|}\left(\frac{3/4}{1/2}\right)^{|\bv|}\frac{b}{d}\right]
~=~ \frac{b}{d}\E_{\bv}\left[\mathbf{1}_{\bv\neq\mathbf{0}} 3^{|\bv|}\right],
\end{equation}
To compute the expectation in closed-form, recall that each entry of $\bv$ is
picked independently to be $1$ with probability $4\rho$, and $0$ otherwise.
Therefore,
\begin{align*}
\E_{\bv}\left[\mathbf{1}_{\bv\neq\mathbf{0}}3^{|\bv|}\right] &=
\E_{\bv}\left[3^{|\bv|}-\mathbf{1}_{\bv=\mathbf{0}}\right] \\
&=\prod_{i=1}^{n}\E_{v_i}[3^{v_i}]-\Pr(\bv=\mathbf{0})\\
&=\left(\E_{v_1}[3^{v_1}]\right)^n-\Pr(\bv=\mathbf{0})\\
&=
(4\rho*3+(1-4\rho)*1)^n-(1-4\rho)^n \\
&=
(1+8\rho)^n-(1-4\rho)^n
~\leq~ \exp(8n\rho)-(1-4n\rho),
\end{align*}
where in the last inequality we used the facts that $(1+a/n)^n\leq \exp(a)$
and $(1-a)^n\geq 1-an$. Since we assume $\rho\leq 1/4n$, $8n\rho\leq 2$, so
we can use the inequality $\exp(x)\leq 1+3.2x$, which holds for any $x\in
[0,2]$, and get that the expression above is at most $(1+26n\rho) -(1-4n\rho)
= 30 n \rho$, and therefore \eqref{eq:ooo} is at most $30 n\rho b/d$. This in
turn is an upper bound on \eqref{eq:i0} as required.

\subsection{Proof of \thmref{thm:bandits}}\label{app:bandits}

Let $c_1,c_2$ be positive parameters to be determined later, and assume by
contradiction that our algorithm can guarantee
$\E[\sum_{t=1}^{T}\ell_{t,i_t}-\sum_{t=1}^{T}\ell_{t,j}]< c_1
\min\{T/4,\sqrt{dT/b}\}$ for any distribution and all $j$.

Consider the set of distributions $\Pr_j(\cdot)$ over $\{0,1\}^d$, where each
coordinate is chosen independently and uniformly, except coordinate $j$ which
equals $0$ with probability $\frac{1}{2}+\rho$, where
$\rho=c_2\min\{1/4,\sqrt{d/bT}\}$. Clearly,the coordinate $i$ which minimizes
$\E[\ell_{t,i}]$ is $j$. Moreover, if at round $t$ the learner chooses some
$i_t\neq j$, then $\E[\ell_{t,i_t}-\ell_{t,j}] = \rho =
c_2\min\{1/4,\sqrt{d/bT}\}$. Thus, to have
$\E[\sum_{t=1}^{T}\ell_{t,i_t}-\sum_{t=1}^{T}\ell_{t,j}]<
c_1\min\{T/4,\sqrt{dT/b}\}$ requires that the expected number of rounds where
$i_t\neq j$ is at most $\frac{c_1}{c_2} T$. By Markov's inequality, this
means that the probability of $j$ not being the most-commonly chosen
coordinate is at most $(c_1/c_2)/(1/2) = 2c_1/c_2$. In other words, if we can
guarantee regret smaller than $c_1 \min\{T/4,\sqrt{dT/b}\}$, then we can
detect $j$ with probability at least $1-2c_1/c_2$, simply by taking the most
common coordinate.

However, by\footnote{The theorem discusses the case where the distribution is
over $\{-1,+1\}^d$, and coordinate $j$ has a slight positive bias, but it's
easily seen that the lower bound also holds here where the domain is
$\{0,1\}^d$.} \thmref{thm:full1}, for any $(b,1,T)$ protocol, there is some
$j$ such that the protocol would correctly detect $j$ with probability at
most
\[
\frac{3}{d}+21\sqrt{\frac{Tb}{d}c_2^2\min\left\{\frac{1}{16},\frac{d}{bT}\right\}}
\leq \frac{3}{d}+21c_2.
\]
Therefore, assuming $d>3$, and taking for instance
$c_1=3.7*10^{-4},c_2=5.9*10^{-3}$, we get that the probability of detection
is at most $\frac{3}{4}+21 c_2 < 0.874$, whereas the scheme discussed in the
previous paragraph guarantees detection with probability at least
$1-2c_1/c_2> 0.874$. We have reached a contradiction, hence our initial
hypothesis is false and our algorithm must suffer regret at least $c_1
\min\{T/4,\sqrt{dT/b}\}$.

\subsection{Proof of \thmref{thm:sparsePCA}}\label{app:sparsePCA}

The proof is rather involved, and is composed of several stages. First, we
define a variant of our hide-and-seek problem, which depends on sparse
distributions. We then prove an information-theoretic lower bound on the
achievable performance for this hide-and-seek problem with information
constraints. The bound is similar to \thmref{thm:full}, but without an
explicit dependence on the bias\footnote{Attaining a dependence on $\rho$
seems technically complex for this hide-and-seek problem, but fortunately is
not needed to prove \thmref{thm:sparsePCA}.} $\rho$. We then show how the
lower bound can be strengthened in the specific case of $b$-memory online
protocols. Finally, we use these ingredients in proving
\thmref{thm:sparsePCA}.

We begin by defining the following hide-and-seek problem, which differs from
problem \ref{def:prob1} in that the distribution is supported on sparse
instances. It is again parameterized by a dimension $d$, bias $\rho$, and
sample size $mn$:
\begin{definition}[Hide-and-seek Problem 2]\label{def:prob2}
Consider the set of distributions $\{\Pr_j(\cdot)\}_{j=1}^{d}$ over\\
$\{-\be_i,+\be_i\}_{i=1}^{d}$, defined as
\[
{\Pr}_j(\be_i)=\begin{cases} \frac{1}{2d} & i\neq j\\ \frac{1}{2d}+\frac{\rho}{d} & i=j\end{cases}
\;\;\;
{\Pr}_j(-\be_i)=\begin{cases} \frac{1}{2d} & i\neq j\\ \frac{1}{2d}-\frac{\rho}{d} & i=j\end{cases}.
\]
Given an i.i.d. sample of $mn$ instances generated from $\Pr_j(\cdot)$, where
$j$ is unknown, detect $j$.
\end{definition}
In words, ${\Pr}_j(\cdot)$ corresponds to picking $\pm \be_i$ where $i$ is
chosen uniformly at random, and the sign is chosen uniformly if $i\neq j$,
and positive (resp. negative) with probability $\frac{1}{2}+\rho$ (resp.
$\frac{1}{2}-\rho$) if $i=j$. It is easily verified that this creates sparse
instances with zero-mean coordinates, except coordinate $j$ whose expectation
is $2\rho/d$.

We now present a result similar to \thmref{thm:full} for this new
hide-and-seek problem:
\begin{theorem}\label{thm:sparse}
Consider hide-and-seek problem 2 on $d>1$ coordinates, with some bias \\$\rho
\leq \min\{\frac{1}{27},\frac{1}{9 \log(d)},\frac{d}{14 n}\}$. Then for any
estimate $\tilde{J}$ of the biased coordinate returned by any $(b,n,m)$
protocol, there exists some coordinate $j$ such that
\[
{\Pr}_J(\tilde{J}= j) \leq \frac{3}{d}+11\sqrt{\frac{mb}{d}}.
\]
\end{theorem}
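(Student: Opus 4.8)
The plan is to follow the architecture of the proof of \thmref{thm:full} --- in particular its ``$30n\rho b/d$'' part --- but with the auxiliary vector $\bv$ replaced by the \emph{support pattern} of the current batch, and with \lemref{lem:ballsbins} used to collapse the resulting likelihood‑ratio factors into an absolute constant. As before, introduce the reference distribution $\Pr_0(\cdot)$ under which each of the $mn$ instances is $\pm\be_i$ with $i$ uniform on $\{1,\dots,d\}$ and the sign uniform (no biased coordinate), and let $w^1,\dots,w^m$ be the messages. By \lemref{lem:totvar} it is enough to prove
\[
\frac{2}{d}\sum_{j=1}^{d}D_{kl}\left(\Pr_0(w^1\ldots w^m)\middle|\middle|\Pr_j(w^1\ldots w^m)\right)\ \le\ 26\,\frac{mb}{d},
\]
since then $B=\sqrt{26\,mb/d}$ gives $\Pr_j(\tilde J=j)\le 3/d+2\sqrt{26\,mb/d}\le 3/d+11\sqrt{mb/d}$ for some $j$. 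By the chain rule for relative entropy this reduces, exactly as in \eqref{eq:whattobound}, to showing that for every round $t$ and every realization $\hat w=w^1\ldots w^{t-1}$ of the earlier messages,
\[
\frac{1}{d}\sum_{j=1}^{d}D_{kl}\left(\Pr_0(w\,|\,\hat w)\middle|\middle|\Pr_j(w\,|\,\hat w)\right)\ \le\ 13\,\frac{b}{d},
\]
where $w=w^t$.

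The key new ingredient is the conditioning variable. Let $\ba=(c_1,\dots,c_n)$ record, for each of the $n$ instances of round $t$, the index of its unique nonzero coordinate, and let $\bx_j$ be the vector of signs of the entries at coordinate $j$, so $\bx_j\in\{-1,+1\}^{K_j}$ with $K_j=|\{l:c_l=j\}|$. Two facts let $\ba$ play the role of $\bv$: first, under $\Pr_0$ and under every $\Pr_j$ a given instance equals $\pm\be_j$ with probability exactly $1/d$, so $\ba$ has the \emph{same} distribution (independent of $j$, and of $\hat w$) in all these models; second, conditioned on $\ba$ the vectors $\bx_1,\dots,\bx_d$ are independent, and under $\Pr_j$ each entry of $\bx_j$ is $+1$ with probability $\tfrac12+\rho$ while every $\bx_i$ with $i\neq j$ (and every $\bx_j$ under $\Pr_0$) has i.i.d.\ uniform signs. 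Using the first fact together with joint convexity of relative entropy (Jensen's inequality, exactly as in the ``$30n\rho b/d$'' argument), the per‑round quantity is at most $\E_{\ba}\big[\tfrac1d\sum_{j}D_{kl}(\Pr_0(w\,|\,\ba,\hat w)\,\|\,\Pr_j(w\,|\,\ba,\hat w))\big]$.

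Now fix $\ba,\hat w$. Coordinates with $K_j=0$ contribute nothing, since then $\bx_j$ is empty under both laws and $w$ has the same conditional distribution. For $K_j\ge1$, apply \lemref{lem:decomposition} with the blocks being the per‑coordinate sign‑vectors (its hypotheses hold since $\bx_i$, $i\neq j$, has the same law under $\Pr_0(\cdot|\ba,\hat w)$ and $\Pr_j(\cdot|\ba,\hat w)$, and $w$ is a function of $\ba,\hat w$ and the $\bx_i$'s), reverse the relative entropy via \lemref{lem:klreverse}, apply convexity once more, and bound the likelihood ratios of the sparse sign‑vectors explicitly by $\big(\tfrac{1/2}{1/2-\rho}\big)^{K_j}$ and $(1+2\rho)^{K_j}$; their product is $\big(\tfrac{1+2\rho}{1-2\rho}\big)^{K_j}\le\exp(c_0\rho K_j)$ for a constant $c_0=4.1$ (valid for $\rho\le\tfrac1{27}$). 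This yields, paralleling \eqref{eq:prr},
\[
D_{kl}\left(\Pr_0(w\,|\,\ba,\hat w)\middle|\middle|\Pr_j(w\,|\,\ba,\hat w)\right)\ \le\ \exp\!\big(c_0\rho\,\textstyle\max_i K_i\big)\, I_{\Pr_0(\cdot|\ba,\hat w)}(w;\bx_j),
\]
and since the $\bx_j$ are conditionally independent and $w$ carries at most $b$ bits, \lemref{lem:infoexp} bounds $\tfrac1d\sum_j I_{\Pr_0(\cdot|\ba,\hat w)}(w;\bx_j)$ by $b/d$. Hence the per‑round quantity is at most $\E_{\ba}\big[\exp(c_0\rho\max_i K_i)\big]\,b/d$, and since $\ba$ is precisely $n$ balls thrown uniformly into $d$ bins, \lemref{lem:ballsbins} applied with $\epsilon=c_0\rho$ --- the hypothesis $\rho\le\min\{\tfrac1{27},\tfrac1{9\log d},\tfrac d{14n}\}$ forces $\epsilon\le\min\{\tfrac16,\tfrac1{2\log d},\tfrac d{3n}\}$ --- makes this expectation $<13$, which completes the proof. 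The main obstacle is getting $\ba$ right: the support pattern must be simultaneously $j$‑independent (so Jensen can pull it outside the relative entropy) and make the coordinates conditionally independent (so \lemref{lem:infoexp} applies); one then has to control $\E_{\ba}[\exp(c_0\rho\max_i K_i)]$, which is exactly what \lemref{lem:ballsbins} is for, and the reason the bias must be capped at $\rho\lesssim d/n$. Everything else is routine likelihood‑ratio bookkeeping and constant‑tracking.
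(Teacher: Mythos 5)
Your proposal is correct and follows essentially the same route as the paper's own proof: the auxiliary vector you call $\ba$ is exactly the paper's $\bu$, and the chain of Jensen (joint convexity), \lemref{lem:decomposition}, \lemref{lem:klreverse}, \lemref{lem:infoexp}, and \lemref{lem:ballsbins} is applied in the same order with the same role for each. The only difference is a marginally tighter likelihood-ratio constant ($c_0=4.1$ vs.\ the paper's $4.4$), which still satisfies the hypotheses of \lemref{lem:ballsbins} under the stated cap on $\rho$.
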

The proof appears in subsection \ref{subsec:thmsparseproof} below, and is
broadly similar to the proof of \thmref{thm:full} (although using a somewhat
different approach).

The theorems above hold for any $(b,n,m)$ protocol, and in particular for
$b$-memory online protocols (since they are a special case of $(b,1,m)$
protocols). However, for $b$-online protocols, the following simple
observation will allow us to further strengthen our results:
\begin{theorem}\label{thm:boundedmem}
    Any $b$-memory online protocol over $m$ instances is also a  $\left(b,\kappa,\left\lfloor\frac{m}{\kappa}\right\rfloor\right)$ protocol for any positive integer $\kappa\leq m$.
\end{theorem}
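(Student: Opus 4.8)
The plan is to realize the given $b$-memory online protocol directly in the form of Definition~\ref{def:pro} with batch size $n=\kappa$, by regrouping the stream of instances into consecutive blocks of $\kappa$ and observing that what is passed from one block to the next is exactly the online protocol's state, which already fits in $b$ bits. So the only work is bookkeeping; there is no real obstacle, which is precisely why the paper flags this as a simple but useful observation.

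Concretely, I would write the $b$-memory online protocol as in \secref{sec:prelim}: reading instances one at a time, it maintains states $S_0,S_1,S_2,\ldots$, each describable by at most $b$ bits, with $S_0$ a fixed (possibly random) initialization, $S_s=g_s(z_s,S_{s-1})$ after reading the $s$-th instance $z_s$, and output $g(S_s)$ when it stops (here $g_s$ is the update map, which in the $b$-memory online case depends on the past only through $S_{s-1}$, and $g$ is the final read-out). Fix $\kappa\le m$, set $M:=\lfloor m/\kappa\rfloor$, and partition the first $M\kappa$ instances into $M$ consecutive blocks $X^1,\ldots,X^M$ of exactly $\kappa$ instances each, discarding the remaining $m-M\kappa<\kappa$ instances (a $b$-memory online protocol is free to ignore a suffix of its input). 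For $t=1,\ldots,M$ define the message function $f_t(X^t,W^1,\ldots,W^{t-1})$ to take $W^{t-1}$ (with $W^{0}:=S_0$), run the $\kappa$ online updates corresponding to the instances of $X^t$ in sequence starting from state $W^{t-1}$, and return the resulting state as $W^t$; and set $f(W^1,\ldots,W^M):=g(W^M)$.

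It then remains only to verify the three requirements of Definition~\ref{def:pro}. First, an easy induction gives $W^t=S_{t\kappa}$ for all $t$, so each $W^t$ is describable by at most $b$ bits, i.e.\ each $f_t$ outputs at most $b$ bits. Second, the construction consumes $M\kappa$ i.i.d.\ instances organized into $M=\lfloor m/\kappa\rfloor$ batches of size $\kappa$, which is precisely the $(b,\kappa,\lfloor m/\kappa\rfloor)$ template; the fact that Definition~\ref{def:pro} permits $f_t$ to depend on all of $W^1,\ldots,W^{t-1}$ and to be randomized means that using only $W^{t-1}$ (plus any internal randomness carried by the $g_s$) is a permitted special case. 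Third, $f(W^1,\ldots,W^M)=g(S_{M\kappa})$ is exactly the output produced by the original online protocol on this prefix, so the constructed $(b,\kappa,\lfloor m/\kappa\rfloor)$ protocol computes the same function of the data. Since $\kappa\le m$ was arbitrary, this proves the claim. The only point one must be careful about — and it is the whole content of the statement — is that the inter-block message can be taken to be the online state and is therefore automatically subject to the same $b$-bit budget; non-divisibility is handled trivially by dropping at most $\kappa-1$ instances.
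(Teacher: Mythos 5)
Your proposal is correct and is exactly the paper's argument, just written out in more detail: group the $m$ instances into $\lfloor m/\kappa\rfloor$ blocks of $\kappa$, let $W^t$ be the online state after processing block $t$ (which fits in $b$ bits), feed instances within a block one-by-one, and discard the trailing $m-\lfloor m/\kappa\rfloor\kappa$ instances. The paper states this as an immediate observation; your version supplies the same simulation with the bookkeeping made explicit.
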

The proof is immediate: Given a a batch of $\kappa$ instances, we can always
feed the instances one by one to our $b$-memory online protocol, and output
the final message after $\lfloor m/\kappa \rfloor$ such batches are
processed, ignoring any remaining instances. This makes the algorithm a type
of $\left(b,\kappa,\left\lfloor\frac{m}{\kappa}\right\rfloor\right)$
protocol.

As a result, when discussing $b$-memory online protocols for some particular
value of $m$, we can actually apply \thmref{thm:sparse} where we replace
$n,m$ by $\kappa,\lfloor m/\kappa\rfloor$, where $\kappa$ is a free parameter
we can tune to attain the most convenient bound.

With these results at hand, we turn to prove \thmref{thm:sparsePCA}.

The lower bound follows from the concentration of measure assumption on
$\widetilde{x_i x_j}$, and a union bound, which implies that
\[
\Pr\left(\forall i<j,~~ |\widetilde{x_i x_j}-\E[x_i x_j]| < \frac{\tau}{2}\right)
\geq
1-\frac{d(d-1)}{2}2\exp\left(-m\tau^2/6\right)
\geq 1-d^2\exp\left(-m\tau^2/6\right).
\]
If this event occurs, then picking $(\tilde{I},\tilde{J})$ to be the
coordinates with the largest empirical mean would indeed succeed in detecting
$(i^*,j^*)$, since $\E[x_{i^*} x_{j^*}]\geq \E[x_i x_j]+\tau$ for all
$(i,j)\neq (i^*,j^*)$.

The upper bound in the theorem statement follows from a reduction to the
setting discussed in \thmref{thm:sparse}. Let
$\{\Pr_{i^*,j^*}(\cdot)\}_{1\leq i^*<j^*\leq d}$ be a set of distributions
over $d$-dimensional vectors $\bx$, parameterized by coordinate pairs
$(i^*,j^*)$. Each such $\Pr_{i^*,j^*}(\cdot)$ is defined as a distribution
over vectors of the form
$\sqrt{\frac{d}{2}}\left(\sigma_1\be_{i}+\sigma_2\be_{j}\right)$ in the
following way:
\begin{itemize}
    \item $(i,j)$ is picked uniformly at random from $\{(i,j):1\leq i<j\leq
        d\}$
    \item $\sigma_1$ is picked uniformly at random from $\{-1,+1\}$.
    \item If $(i,j)\neq(i^*,j^*)$, $\sigma_2$ is picked uniformly at random
        from $\{-1,+1\}$. If $(i,j)=(i^*,j^*)$, then $\sigma_2$ is chosen
        to equal $\sigma_1$ with probability $\frac{1}{2}+\rho$ (for some
        $\rho\in (0,1/2)$ to be determined later), and $-\sigma_1$
        otherwise.
\end{itemize}
In words, each instance is a $2$-sparse random vector, where the two non-zero
coordinates are chosen at random, and are slightly correlated if and only if
those coordinates are $(i^*,j^*)$.

Let us first verify that any such distribution $\Pr_{i^*,j^*}(\cdot)$ belongs
to the distribution family specified in the theorem:
\begin{enumerate}
\item For any coordinate $k$, $x_k$ is non-zero with probability $2/d$
    (i.e. the probability that either $i$ or $j$ above equal $k$), in which
    case $x_k^2=d/2$. Therefore,  $\E[x_k^2]=1$ for all $k$.
\item When $(i,j)\neq(i^*,j^*)$, then $\sigma_1,\sigma_2$ are uncorrelated,
    hence $\E[x_i x_j]=0$. On the other hand, $\E[x_{i^*} x_{j^*}] =
    \frac{2}{d(d-1)}\left(\left(\frac{1}{2}+\rho\right)\frac{d}{2}+\left(\frac{1}{2}-\rho\right)\left(-\frac{d}{2}\right)\right)=
    \frac{2\rho}{d-1}$. So we can take $\tau=\frac{2\rho}{d-1}$, and have
    that $\E[x_{i^*}x_{j^*}]=\tau$.
\item For any $i<j$, $x_i x_j$ is a random variable which is non-zero with
    probability $2/(d(d-1))$, in which case its magnitude is $d/2$. Thus,
    $\E[(x_i x_j)^2]\leq\frac{d}{2(d-1)}$. Applying Bernstein's inequality,
    if $\widetilde{x_i x_j}$ is the empirical average of $x_i x_j$ over $m$
    i.i.d. instances, then
\[
\Pr\left(\left|\widetilde{x_i x_j}-\E[x_i x_j]\right|\geq \frac{\tau}{2}\right)\leq
2\exp\left(-\frac{m\tau^2}{4\left(\frac{d}{d-1}+\frac{d}{3}\tau\right)}\right). \]
    Since we chose $\tau=\frac{2\rho}{d-1} < \frac{1}{d-1}$, and we assume
    $d\geq 9$, this bound is at most
    \[
    2\exp\left(-\frac{m\tau^2}{\frac{4d}{d-1}\left(1+\frac{1}{3}\right)}\right)
    \leq 2\exp\left(-\frac{m\tau^2}{6}\right).
    \]
\end{enumerate}
Therefore, this distribution satisfies the theorem's conditions.

The crucial observation now is that the problem of detecting $(i^*,j^*)$ is
can be reduced to a hide-and-seek problem as defined in Definition
\ref{def:prob2}. To see why, let us consider the distribution over $d\times
d$ matrices induced by $\bx \bx^\top$, where $\bx$ is sampled according to
$\Pr_{i^*,j^*}(\cdot)$ as described above, and in particular the distribution
on the entries above the main diagonal. It is easily seen to be equivalent to
a distribution which picks one entry $(i,j)$ uniformly at random from
$\{(i,j): 1\leq i<j\leq d\}$, and assigns to it the value
$\left\{-\frac{d}{2},+\frac{d}{2}\right\}$ with equal probability, unless
$(i,j)=(i^*,j^*)$, in which case the positive value is picked with
probability $\frac{1}{2}+\rho$, and the negative value with probability
$\frac{1}{2}-\rho$. This is equivalent to the hide-and-seek problem described
in Definition \ref{def:prob2}, over $\frac{d(d-1)}{2}$ coordinates. Thus, we
can apply \thmref{thm:sparse} for $\frac{d(d-1)}{2}$ coordinates, and get
that if $\rho\leq
\min\left\{\frac{1}{27},\frac{1}{9\log\left(\frac{d(d-1)}{2}\right)},\frac{d(d-1)}{28
n}\right\}$, then for some $(i^*,j^*)$ and any estimator
$(\tilde{I},\tilde{J})$ returned by a $(b,n,m)$ protocol,
\[
{\Pr}_{i^*,j^*}\left((\tilde{I},\tilde{J})=(i^*,j^*)\right) \leq \frac{6}{d(d-1)}+11\sqrt{\frac{2mb}{d(d-1)}}.
\]
Our theorem deals with two types of protocols: $\left(b,d(d-1),\lfloor
\frac{m}{d(d-1)}\rfloor\right)$ protocols, and $b$-memory online protocols
over $m$ instances. In the former case, we can simply plug in $\left\lfloor
\frac{m}{d(d-1)}\right\rfloor, d(d-1)$ instead of $m,n$, while in the latter
case we can still replace $m,n$ by $\left\lfloor
\frac{m}{d(d-1)}\right\rfloor, d(d-1)$ thanks to \thmref{thm:boundedmem}. In
both cases, doing this replacement and choosing
$\rho=\frac{1}{9\log\left(\frac{d(d-1)}{2}\right)}$ (which is justified when
$d\geq 9$, as we assume), we get that
\begin{equation}
{\Pr}_{i^*,j^*}\left((\tilde{I},\tilde{J})=(i^*,j^*)\right) \leq \frac{6}{d(d-1)}+11\sqrt{\frac{2b}{d(d-1)}\left\lfloor \frac{m}{d(d-1)}\right\rfloor}
\leq \Ocal\left(\frac{1}{d^2}+\sqrt{\frac{m}{d^4/b}}\right).\label{eq:prup}
\end{equation}
This implies the upper bound stated in the theorem, and also noting that
\[
\tau = \frac{2\rho}{d-1}=\frac{2}{9(d-1)\log\left(\frac{d(d-1)}{2}\right)}
= \Theta\left(\frac{1}{d\log(d)}\right).
\]

Having finished with the proof of the theorem as stated, we note that it is
possible to extend the construction used here to show performance gaps for
other sample sizes $m$. For example, instead of using a distribution
supported on
\[
\left\{\sqrt{\frac{d}{2}}\left(\sigma_1\be_{i}+\sigma_2\be_{j}\right)\right\}_{1\leq i<j\leq d}
\]
for any pair of coordinates $1\leq i<j\leq d$, we can use a distribution
supported on
\[
\left\{\sqrt{\frac{\lambda}{2}}\left(\sigma_1\be_{i}+\sigma_2\be_{j}\right)\right\}
_{1\leq i<j\leq \lambda}
\]
for some $\lambda \leq d$. By choosing the bias
$\tau=\Theta(1/\lambda\log(\lambda))$, we can show a performance gap (in
detecting the correlated coordinates) in the regime $\frac{\lambda^4}{b}\gg m
\gg \lambda^2\log^2(\lambda)$. This regime exists for $\lambda$ as small as
$\sqrt{b}$ (up to log-factors), in which case we already get performance gaps
when $m$ is roughly linear in the memory $b$.


\subsection{Proof of \thmref{thm:sparse}}\label{subsec:thmsparseproof}

The proof builds on the auxiliary lemmas presented in Appendix
\ref{subsec:auxiliary}. It is broadly similar to the proof of
\thmref{thm:full}, but with a few more technical intricacies (such as
balls-and-bins arguments) to handle the different sampling process.

On top of the distributions ${\Pr}_{j}(\cdot)$ defined in the hide-and-seek
problem (Definition \ref{def:prob2}), we define an additional `reference'
distribution ${\Pr}_{0}(\cdot)$, which corresponds to the instances being
chosen uniformly at random from $\{-\be_i,+\be_i\}_{i=1}^{d}$ (i.e. there is
no biased coordinate).

Let $w^1,\ldots,w^m$ denote the messages computed by the protocol. To show the lower bound, it is enough to prove that
\begin{equation}\label{eq:2ppr0}
\frac{2}{d}\sum_{j=1}^{d}D_{kl}\left({\Pr}_{0}(w^1\ldots w^m)\middle|\middle|{\Pr}_{j}(w^1\ldots w^m)\right) \leq \frac{26 m b}{d},
\end{equation}
since then by applying \lemref{lem:totvar}, we get that for some $j$, ${\Pr}_{j}(\tilde{J}=j) \leq (3/d)+2\sqrt{26 m b/d} < (3/d)+11\sqrt{mb/d}$ as required.

Using the chain rule, the left hand side in \eqref{eq:2ppr0} equals
\begin{align}
&\frac{2}{d}\sum_{j=1}^{d}\sum_{t=1}^{m}\E_{w^1\ldots w^{t-1}\sim {\Pr}_{0}}\left[D_{kl}\left({\Pr}_{0}(w^t|w^1\ldots w^{t-1})||{\Pr}_{j}(w^t|w^1\ldots w^{t-1})\right)\right]\notag\\
&=2\sum_{t=1}^{m}\E_{w^1\ldots w^{t-1}\sim {\Pr}_{0}}\left[\frac{1}{d}\sum_{j=1}^{d}D_{kl}\left({\Pr}_{0}(w^t|w^1\ldots w^{t-1})||{\Pr}_{j}(w^t|w^1\ldots w^{t-1})\right)\right]
\label{eq:2whattobound}
\end{align}
Let us focus on a particular choice of $t$ and values $w^1\ldots w^{t-1}$. To simplify the presentation, we drop the $^t$ superscript from the message $w^t$, and denote the previous messages $w^1\ldots w^{t-1}$ as $\hat{w}$. Thus, we consider the quantity
\begin{equation}\label{eq:2i0}
\frac{1}{d}\sum_{j=1}^{d}D_{kl}\left({\Pr}_{0}(w|\hat{w})||{\Pr}_{j}(w|\hat{w})\right).
\end{equation}
Recall that $w$ is some function of $\hat{w}$ and a set of $n$ instances
received in the current round. Moreover, each instance is non-zero at a
single coordinate, with a value in $\{-1,+1\}$. Thus, given an ordered
sequence of $n$ instances, we can uniquely specify them using vectors
$\bu,\bx_1,\ldots,\bx_d$, where
\begin{itemize}
  \item $\bu\in \{1\ldots d\}^n$, where each $e_i$ indicates what is the
      non-zero coordinate of the $i$-th instance.
  \item Each $\bx_j\in \{-1,+1\}^{|\{i:e_i=j\}|}$ is a (possibly empty) vector of the non-zero values, when those values fell in coordinate $j$.
\end{itemize}
For example, if $d=3$ and the instances are $(-1,0,0);(0,1,0);(0,-1,0)$, then
$\bu=(1,2,2); \bx_1=(-1); \bx_2=(1,-1); \bx_3=\emptyset$. Note that under
both ${\Pr}_{0}(\cdot)$ and ${\Pr}_{j}(\cdot)$, $\bu$ is uniformly
distributed in $\{1\ldots d\}^n$, and $\{\bx_j\}_j$ are mutually independent
conditioned on $\bu$.

With this notation, we can rewrite \eqref{eq:2i0} as
\[
\frac{1}{d}\sum_{j=1}^{d}D_{kl}\left(\E_{\bu}\left[{\Pr}_{0}(w|\bu,\hat{w})\right]||\E_{\bu}\left[{\Pr}_{j}(w|\bu,\hat{w})\right]\right).
\]
Since the relative entropy is jointly convex in its arguments, we have by Jensen's inequality that this is at most
\begin{equation}
\frac{1}{d}\sum_{j=1}^{d}\E_{\bu}\left[D_{kl}\left({\Pr}_{0}(w|\bu,\hat{w})||{\Pr}_{j}(w|\bu,\hat{w})\right)\right]
~=~
\E_{\bu}\left[\frac{1}{d}\sum_{j=1}^{d}D_{kl}\left({\Pr}_{0}(w|\bu,\hat{w})||{\Pr}_{j}(w|\bu,\hat{w})\right)\right].
\label{eq:22i0}
\end{equation}
We now decompose ${\Pr}_{j}(w|\bu,\hat{w})$ using \lemref{lem:decomposition},
where
$p(\cdot)={\Pr}_{j}(\cdot|\bu,\hat{w})$,$q(\cdot)={\Pr}_{0}(\cdot|\bu,\hat{w})$
and each $z_i$ is $\bx_i$. The lemma's conditions are satisfied since the
distribution of $\bx_i$, $i\neq j$ is the same under
${\Pr}_{0}(\cdot|\bu,\hat{w}),{\Pr}_{j}(\cdot|\bu,\hat{w})$, and also $w$ is
only a function of $\bu,\bx_1\ldots \bx_d$ and $\hat{\bw}$. Thus, we can
rewrite \eqref{eq:22i0} as
\[
\E_{\bu}\left[\frac{1}{d}\sum_{j=1}^{d}D_{kl}\left({\Pr}_{0}(w|\bu,\hat{w})~\middle|\middle|~
\sum_{\bx_j}{\Pr}_{j}(\bx_j|\bu,\hat{w}){\Pr}_{0}(w|\bu,\bx_j,\hat{w})\right)\right].
\]
Using \lemref{lem:klreverse}, we can reverse the expressions in the relative entropy term, and upper bound the above by

\begin{align}
&\E_{\bu}\left[\frac{1}{d}\sum_{j=1}^{d}\left(\max_{w}\frac{{\Pr}_{0}(w|\bu,\hat{w})}{\sum_{\bx_j}{\Pr}_{j}(\bx_j|\bu,\hat{w}){\Pr}_{0}(w|\bu,\bx_j,\hat{w})}\right)
\times \right. \notag\\
& \left.D_{kl}\left(\sum_{\bx_j}{\Pr}_{j}(\bx_j|\bu,\hat{w}){\Pr}_{0}(w|\bu,\bx_j,\hat{w})~\middle|\middle|~
{\Pr}_{0}(w|\bu,\hat{w})\right)\right].\label{eq:2prr}
\end{align}
The max term equals
\[
\max_{w}\frac{\sum_{\bx_j}{\Pr}_{0}(\bx_j|\bu,\hat{w}){\Pr}_{0}(w|\bu,\bx_j,\hat{w})}{\sum_{\bx_j}{\Pr}_{j}(\bx_j|\bu,\hat{w}){\Pr}_{0}(w|\bu,\bx_j,\hat{w})}
\leq \max_{\bx_j}\frac{{\Pr}_{0}(\bx_j|\bu,\hat{w})}{{\Pr}_{j}(\bx_j|\bu,\hat{w})},
\]
and using Jensen's inequality and the fact that relative entropy is convex in its arguments, we can upper bound the relative entropy term by
\begin{align*}
&\sum_{\bx_j}{\Pr}_{j}(\bx_j|\bu,\hat{w}) D_{kl}\left({\Pr}_{0}(w|\bu,\bx_j,\hat{w})~\middle|\middle|~
{\Pr}_{0}(w|\bu,\hat{w})\right)\\
&\leq \left(\max_{\bx_j}\frac{{\Pr}_{j}(\bx_j|\bu,\hat{w})}{{\Pr}_{0}(\bx_j|\bu,\hat{w})}\right)\sum_{\bx_j}{\Pr}_{0}(\bx_j|\bu,\hat{w}) D_{kl}\left({\Pr}_{0}(w|\bu,\bx_j,\hat{w})~\middle|\middle|~
{\Pr}_{0}(w|\bu,\hat{w})\right).
\end{align*}
The sum in the expression above equals the mutual information between the
message $w$ and the coordinate vector $\bx_j$ (seen as random variables with
respect to the distribution ${\Pr}_{0}(\cdot|\bu,\hat{w})$). Writing this as
$I_{{\Pr}_{0}(\cdot|\bu,\hat{w})}(w;\bx_j)$, we can thus upper bound
\eqref{eq:2prr} by
\begin{align*}
&\E_{\bu}\left[\frac{1}{d}\sum_{j=1}^{d}\left(\max_{\bx_j}\frac{{\Pr}_{0}(\bx_j|\bu,\hat{w})}{{\Pr}_{j}(\bx_j|\bu,\hat{w})}\right)
\left(\max_{\bx_j}\frac{{\Pr}_{j}(\bx_j|\bu,\hat{w})}{{\Pr}_{0}(\bx_j|\bu,\hat{w})}\right)I_{{\Pr}_{0}(\cdot|\bu,\hat{w})}(w;\bx_j)\right]\\
&\leq \E_{\bu}\left[\left(\max_{j,\bx_j}\frac{{\Pr}_{0}(\bx_j|\bu,\hat{w})}{{\Pr}_{j}(\bx_j|\bu,\hat{w})}\right)
\left(\max_{j,\bx_j}\frac{{\Pr}_{j}(\bx_j|\bu,\hat{w})}{{\Pr}_{0}(\bx_j|\bu,\hat{w})}\right)\frac{1}{d}\sum_{j=1}^{d}I_{{\Pr}_{0}(\cdot|\bu,\hat{w})}(w;\bx_j)
\right].
\end{align*}
Since $\bx_1\ldots \bx_d$ are mutually independent conditioned on $\bu$ and
$\hat{w}$, and also $w$ contains at most $b$ bits, we can use the key
\lemref{lem:infoexp} to upper bound the above by
\[
\E_{\bu}\left[\left(\max_{j,\bx_j}\frac{{\Pr}_{0}(\bx_j|\bu,\hat{w})}{{\Pr}_{j}(\bx_j|\bu,\hat{w})}\right)
\left(\max_{j,\bx_j}\frac{{\Pr}_{j}(\bx_j|\bu,\hat{w})}{{\Pr}_{0}(\bx_j|\bu,\hat{w})}\right)\frac{b}{d}\right].
\]
Now, recall that conditioned on $\bu$, each $\bx_j$ refers to a column of
$|\{i:e_i=j\}|$ i.i.d. entries, drawn independently of any previous messages
$\hat{w}$, where under ${\Pr}_{0}$, each entry is chosen to be $\pm 1$ with
equal probability, whereas under ${\Pr}_{j}$ each is chosen to be $1$ with
probability $\frac{1}{2}+\rho$, and $-1$ with probability $\frac{1}{2}-\rho$.
Therefore, we can upper bound the expression above by
\[
\E_{\bu}\left[\left(\max_j \max\left\{\left(\frac{1/2+\rho}{1/2}\right)^{|\{i:e_i=j\}|},\left(\frac{1/2}{1/2-\rho}\right)^{|\{i:e_i=j\}|}\right\}\right)^2\frac{b}{d}\right].
\]
Since we assume $\rho\leq 1/27$, it's easy to verify that the expression above is at most
\begin{align*}
&\E_{\bu}\left[\left(\max_j \left(1+2.2 \rho\right)^{|\{i:e_i=j\}|}\right)^2\frac{b}{d}\right]
= \E_{\bu}\left[\left(\max_j \left(1+\frac{4.4\rho|\{i:e_i=j\}|}{2|\{i:e_i=j\}|}\right)^{2|\{i:e_i=j\}|}\right)\frac{b}{d}\right]\\
&\leq \E_{\bu}\left[\max_j \exp\left(4.4\rho|\{i:e_i=j\}|\right)\right]\frac{b}{d}
~=~ \E_{\bu}\left[\exp\left(4.4\rho~\max_j|\{i:e_i=j\}|\right)\right]\frac{b}{d}
\end{align*}
Since $\bu$ is uniformly distributed in $\{1\ldots d\}^n$, then
$\max_j|\{i:e_i=j\}$ corresponds to the largest number of balls in a bin when
we randomly throw $n$ balls into $d$ bins. By \lemref{lem:ballsbins}, and
since we assume $\rho \leq
\min\{\frac{1}{27},\frac{1}{9\log(d)},\frac{d}{14n}\}$, it holds that the
expression above is at most $13b/d$. To summarize, this is a valid upper
bound on \eqref{eq:2i0}, i.e. on any individual term inside the expectation
in  \eqref{eq:2whattobound}. Thus, we can upper bound \eqref{eq:2whattobound}
by $26mb/d$. This shows that \eqref{eq:2ppr0} indeed holds, which as
explained earlier implies the required result.

\section{Basic Results in Information Theory}\label{app:info}

The proof of \thmref{thm:full} and \thmref{thm:sparse} makes extensive use of
quantities and basic results from information theory. We briefly review here
the technical results relevant for our paper. A more complete introduction
may be found in  \cite{cover2012elements}. Following the settings considered
in the paper, we will focus only on discrete distributions taking values on a
finite set.

Given a random variable $X$ taking values in a domain $\Xcal$, and having a
distribution function $p(\cdot)$, we define its entropy as
\[
H(X)=\sum_{x\in\Xcal}p(x)\log_2(1/p(x)) = \E_{X}\log_2\left(\frac{1}{p(x)}\right).
\]
Intuitively, this quantity measures the uncertainty in the value of $X$. This
definition can be extended to joint entropy of two (or more) random
variables, e.g. $H(X,Y)=\sum_{x,y}p(x,y)\log_2(1/p(x,y))$, and to conditional
entropy
\[
H(X|Y) = \sum_{y}p(y)\sum_{x}p(x|y)\log_2\left(\frac{1}{p(x|y)}\right).
\]
For a particular value $y$ of $Y$, we have
\[
H(X|Y=y) = \sum_{x}p(x|y)\log_2\left(\frac{1}{p(x|y)}\right)
\]
It is possible to show that $\sum_{j=1}^{n}H(X_i) \geq H(X_1,\ldots,X_n)$,
with equality when $X_1,\ldots,X_n$ are independent. Also, $H(X)\geq H(X|Y)$
(i.e. conditioning can only reduce entropy). Finally, if $X$ is supported on
a discrete set of size $2^b$, then $H(X)$ is at most $b$.

Mutual information $I(X;Y)$ between two random variables $X,Y$ is defined as
\[
I(X;Y) = H(X)-H(X|Y) = H(Y)-H(Y|X) = \sum_{x,y}p(x,y)\log_2\left(\frac{p(x,y)}{p(x)p(y)}\right).
\]
Intuitively, this measures the amount of information each variable carries on
the other one, or in other words, the reduction in uncertainty on one
variable given we know the other. Since entropy is always positive, we
immediately get $I(X;Y)\leq \min\{H(X),H(Y)\}$. As for entropy, one can
define the conditional mutual information between random variables X,Y given
some other random variable $Z$ as
\[
I(X;Y|Z) = \E_{z\sim Z}[I(X;Y|Z=z)]=\sum_z p(z)\sum_{x,y}p(x,y|z)\log_2\left(\frac{p(x,y|z)}{p(x|z)p(y|z)}\right).
\]

Finally, we define the relative entropy (or Kullback-Leibler divergence)
between two distributions $p,q$ on the same set as
\[
D_{kl}(p||q) = \sum_x p(x)\log_2\left(\frac{p(x)}{q(x)}\right).
\]
It is possible to show that relative entropy is always non-negative, and
jointly convex in its two arguments (viewed as vectors in the simplex). It
also satisfies the following chain rule:
\[
D_{kl}(p(x_1\ldots x_n)||q(y_1\ldots y_n) = \sum_{i=1}^{n}\E_{x_1\ldots x_{i-1} \sim p}\left[D_{kl}(p(x_{i}|x_1\ldots x_{i-1})||q(x_{i}|x_1\ldots x_{i-1}))\right].
\]

Also, it is easily verified that
\[
I(X;Y) = \sum_{y}p(y)~D_{kl}(p_{X}(\cdot|y)||p_{X}(\cdot)),
\]
where $p_{X}$ is the distribution of the random variable $X$. In addition, we
will make use of Pinsker's inequality, which upper bounds the so-called total
variation distance of two distributions $p,q$ in terms of the relative
entropy between them:
\[
\sum_x |p(x)-q(x)| \leq \sqrt{2 D_{kl}(p||q)}.
\]

Finally, an important inequality we use in the context of relative entropy calculations is the log-sum inequality. This inequality states that for any nonnegative $a_i,b_i$,
\[
\left(\sum_i a_i\right)\log\frac{\sum_i a_i}{\sum_i b_i} \leq \sum_i a_i \log\frac{a_i}{b_i}.
\]

\end{document}